%% file: main.tex
\documentclass[11pt]{article}

\usepackage[numbers,sort&compress]{natbib}
\usepackage{geometry}
\usepackage{multirow}
\usepackage[utf8]{inputenc} 
\usepackage[T1]{fontenc}    
\usepackage{hyperref}       
\usepackage{url}            
\usepackage{booktabs}       
\usepackage{amsfonts}       
\usepackage{nicefrac}       
\usepackage{microtype}      
\usepackage{xcolor}         
\usepackage{graphicx}
\usepackage{makecell}

\usepackage{amsmath,amsthm,amssymb}
\makeatletter
\def\@fnsymbol#1{\ensuremath{\ifcase#1\or \dagger\or \ddagger\or
   \mathsection\or \mathparagraph\or \|\or **\or \dagger\dagger
   \or \ddagger\ddagger \else\@ctrerr\fi}}
\makeatother
\newcommand*\samethanks[1][\value{footnote}]{\footnotemark[#1]}

\title{Efficient Online Conformal Selection with Limited Feedback}

\author{Sreenivas Gollapudi\thanks{Google Research, Mountain View, CA, USA. Email:{\tt \{sgollapu,kostaskollias,asinop\}@google.com}} \and Kostas Kollias\samethanks[1] \and Kamesh Munagala\thanks{Department of Computer Science, Duke University, Durham, NC, USA. Email: {\tt kamesh@cs.duke.edu}. This work was done while the author was visiting Google Research.} \and Ali Sinop\samethanks[1] 
}
\date{}

\newtheorem{theorem}{Theorem}
\newtheorem{lemma}{Lemma}
\newtheorem{claim}{Claim}
\newtheorem{corollary}{Corollary}

\newtheorem{example}{Example}

\begin{document}

\maketitle
\input{abstract}
\input{intro_new}

\input{ICLR/full_new}

\input{ICLR/experiment}

\input{ICLR/conclusion}

\paragraph{Acknowledgments.} We thank Aravindan Vijayaraghavan for several helpful discussions. 
The authors used Gemini 3.1 Pro for the following tasks: (1) identifying  and summarizing prior work; (2) paraphrasing and polishing the text; and (3) generating Python code for the experiments. The authors also benefited from  discussions with the model, which helped suggest approaches to proving Theorem~\ref{thm:bandit2}. All AI-generated content, particularly citations, code, and analysis, was verified by the authors, who take full responsibility for its correctness.

\bibliographystyle{plainnat}
\bibliography{references}

\appendix
\input{neurips/bandit}

\end{document}

%% file: abstract.tex

\begin{abstract}
We address the problem of conformal selection, where an agent must select a low-cost subset of options to ensure that at least one ``success'' is identified at a pre-specified target rate $\phi$. While traditional online conformal prediction focuses on maintaining validity for the observed sequence, minimizing the resource cost (efficiency) of such selections---especially under limited feedback---remains a significant challenge. In this work, we consider highly restricted ``bandit'' feedback, where the agent only observes feedback about the subset it selected, and not the true label, point, or outcomes of unchosen options. We demonstrate that the simple Adaptive Conformal Inference (ACI) update rule, when applied to the appropriate control parameter or dual variable and paired with explicit boundary actions, is both adversarially valid, ensuring the success target is met on average for any input sequence (and hence under distribution shifts), and stochastically efficient, achieving sublinear efficiency regret for i.i.d. inputs against an optimal stochastic benchmark. The key algorithmic idea is to avoid the projected updates standard in constrained bandits: projections break the exact telescoping identity behind ACI validity, whereas boundary actions stabilize the unprojected update through actual decisions. We show these guarantees under canonical models capturing bandit feedback via a unified algorithmic technique and Lyapunov-based analysis. Our approach handles more general settings than prior work, while requiring significantly less feedback, and provides a new theoretical bridge between efficient online learning with limited feedback and distribution-free uncertainty quantification.
\end{abstract}

%% file: intro_new.tex
\section{Introduction}

In many real-world decision-making scenarios, an agent must select a subset of options---such as probing edges in a routing network, selecting candidates for an interview, or activating sensors---to ensure the existence of at least one ``success'' (e.g., a valid route, a qualified hire, or a detected signal). This problem, which we term {\em conformal selection}, generalizes traditional conformal prediction from the task of producing uncertainty sets for labels~\cite{vovk2005algorithmic,angelopoulos2021gentle} to the task of selecting optimal subsets under resource constraints.

This work studies sequential conformal selection under highly restricted bandit feedback. At each time $t$, the learner chooses a selection rule, or equivalently a subset, and observes only whether the chosen set contained a success. In our main  model where these selection rules correspond to bandit arms (or actions), the learner also observes the realized cost of the chosen set; it does not observe the success or cost of unchosen actions, the ground-truth label or point, or the scores of the underlying options. This is substantially weaker feedback than recent online conformal literature. (See Table~\ref{tab:comparison}.) We assume a target success level $\phi \in (0,1)$, and two boundary actions: a guaranteed-success action, corresponding to ``choosing everything,'' and a guaranteed-failure action, corresponding to ``choosing nothing.'' These boundary actions  are typically available in the online conformal selection applications like interval selection~\cite{Srinivas26} and scoring rules~\cite{ge2025stochastic,angelopoulos2021gentle} that we describe in Section~\ref{sec:discrete}.

Our goal is to obtain two guarantees simultaneously. The first is {\em adversarial validity}~\cite{GibbsC}: for every realized input sequence, the empirical success rate of the selected sets tracks the target $\phi$. That is, if $Y_t$ denotes whether the set chosen at time $t$ succeeds, then
\[
    \left| \frac{1}{T} \sum_{t=1}^T Y_t - \phi \right| = O\left(T^{-\alpha}\right)
\]
for constant $\alpha > 0$.\footnote{Standard conformal calibration would only require the average success rate to be asymptotically lower bounded by $\phi$; our algorithms obtain asymptotic tracking of $\phi$ as a consequence of the ACI update.}
The second is {\em stochastic efficiency}: when input rewards and costs (or contexts) are drawn i.i.d. from an unknown distribution, the expected cost of the chosen sets has sublinear (in time) regret relative to an optimal distribution-aware baseline that enforces validity in expectation. 

\paragraph{Validity notion.} We remark that our validity notion is orthogonal to the stochastic per-step validity guarantees studied in recent online conformal work~\cite{angelopoulos24a,Srinivas26,ge2025stochastic,WangQ} with stronger full-information or semi-bandit feedback. Our adversarial guarantee is pathwise and long-run; following~\cite{GibbsC,ramalingam2025}, it makes no distributional assumption and remains meaningful under distribution shift. Stochastic per-step validity, by contrast, controls the marginal coverage at each time under i.i.d. or exchangeability assumptions, but is not a pathwise statement. Neither notion implies the other. The point of this work is therefore not to replace stochastic validity, but to show that pathwise validity can be combined with stochastic efficiency even under substantially weaker feedback than in prior work.

\paragraph{Algorithmic Idea.} A natural starting point for minimizing cost subject to a success constraint is the constrained stochastic bandit literature~\cite{AgarwalD,Cayci_Zheng_Eryilmaz_2022,pmlr-v247-guo24a,Slivkins_context}. These algorithms typically introduce a dual variable: the penalty increases after failures, forcing the learner to choose safer, costlier actions, and decreases after successes. This is similar to the Adaptive Conformal Inference (ACI) update~\cite{GibbsC} for ensuring validity, which given a control parameter $\kappa_t$ (say a dual variable),  has the form
\[
    \kappa_{t+1} = \kappa_t + \eta(\phi - Y_t).
\]
In classical stochastic bandit literature, to keep the resulting dual variable bounded, standard algorithms use a projection, floor, or cap. This is benign for stochastic regret, but conflicts with adversarial conformal validity. The ACI update obtains pathwise (adversarial) conformal validity through an exact telescoping identity: the control parameter keeps a running sum of past deviations from the target. Projecting this parameter causes the algorithm to ``forget'' uncompensated errors, introducing residual terms that can be linear on adversarial sequences.

We provide a simple resolution to this incompatibility. Instead of projecting the ACI update, we leave it unprojected and stabilize it through physical boundary actions.  When the control parameter becomes too large, the algorithm plays the guaranteed-success action; when it becomes too small, it plays the guaranteed-failure action. Thus the state is stabilized by actual decisions rather than by modifying the update. This preserves the exact telescoping identity needed for adversarial validity, while still allowing a Lyapunov-drift analysis for stochastic efficiency.

\subsection{Our Contributions}

Our main contribution in Section~\ref{sec:full} is a multi-armed bandit framework for efficient online conformal selection with pathwise validity. Our results can be summarized as follows.

\paragraph{\bf Finite-arm conformal selection with bandit feedback.}
In Section~\ref{sec:full}, we present our main result. We formulate conformal selection as an $n$-armed bandit problem. Each arm is a possible selection rule, and the learner observes only the success and cost of the arm it plays (that is, of the resulting chosen set). This model covers scalar score thresholds, multi-dimensional or unstructured selection spaces, and  other settings where no reliable scalar score is available. Because there may be no continuous calibration parameter, we apply the ACI update to a dual variable. The key algorithmic step is to use the explicit boundary arms described above instead of the projected dual updates standard in constrained bandits. We prove pathwise adversarial validity, and in the i.i.d. setting we show stochastic efficiency regret
$
    \widetilde{O}\!\left(\sqrt{nT\log(nT)}\right)
$
against the optimal stochastic policy satisfying the expected validity constraint.

\paragraph{\bf Consequences for score thresholds and interval selection.}
In Section~\ref{sec:discrete}, we instantiate our main theorem in two canonical online conformal selection settings considered (with more feedback) in recent work. First, for a fixed scalar scoring rule~\cite{ge2025stochastic,WangQ,yang2026onlineconformalpredictionadversarial,angelopoulos2021gentle} where the learner only observes success/failure of the chosen threshold along with cost of the chosen set (but not the true label), discretizing the threshold space reduces the problem to the bandit model above. Under a Lipschitz assumption on the expected cost, this yields $\widetilde{O}(T^{2/3})$ stochastic efficiency regret while retaining adversarial validity. Second, for online interval selection on $[0,1]$~\cite{Srinivas26}, discretizing the space of intervals yields $\widetilde{O}(T^{3/4})$ stochastic efficiency regret under bandit feedback, where the learner observes only whether the selected interval contains the point and not the point itself. These corollaries show that the finite-arm result implies conformal threshold calibration under bandit feedback, while also applying to selection problems without a scalar score. We compare these results to prior work in terms of feedback and validity models, efficiency benchmark, and regret bounds in Table~\ref{tab:comparison}.

\paragraph{Additional results.} We also give, in  Appendix~\ref{sec:bandit}, a lightweight success-only scalar-threshold variant that trades off generality and regret rate for lower feedback requirements, again presenting a comparison in Table~\ref{tab:comparison}.

\begin{table*}[t]
\centering
\caption{Comparison with related online conformal prediction literature. Our feedback notion is weaker than prior work, while the validity notions are different. }
\label{tab:comparison}
\resizebox{\textwidth}{!}{%
\small
\begin{tabular}{l l l l l l l}
\toprule
\textbf{Setting} & \textbf{Prior work} & \textbf{This paper} & \textbf{Feedback} & \textbf{Validity notion} & \textbf{Efficiency benchmark} & \textbf{Efficiency regret} \\
\midrule
\multirow{3}{*}{\makecell{Scoring function \\ (Section~\ref{sec:fixed})}}
& \cite{ge2025stochastic,WangQ}
& --
& Semi-bandit (sees true label)
& Stochastic
& Optimal stochastic policy
& $\widetilde{O}(\sqrt{T})$ \\
& --
& Section~\ref{sec:full}
& Bandit (sees cost and reward)
& Adversarial
& Optimal stochastic policy\footnotemark[1]
& $\widetilde{O}(T^{2/3})$ \\ 
& --
& Appendix~\ref{sec:bandit}
& Bandit (does not see cost)
& Adversarial
& Optimal fixed-threshold policy\footnotemark[2]
& $\widetilde{O}(T^{3/4})$ \\
\midrule
\multirow{2}{*}{\makecell{1-D intervals \\ (Section~\ref{sec:interval_instantiation})}}
& \cite{Srinivas26}
& --
& Full info. (sees data point)
& Stochastic
& Optimal stochastic policy\footnotemark[3] 
& $\widetilde{O}(\sqrt{T})$\\
& --
& Section~\ref{sec:full}
& Bandit (doesn't see data point)
& Adversarial
& Optimal stochastic policy
& $\widetilde{O}(T^{3/4})$ \\
\bottomrule
\end{tabular}%
}
\end{table*}

\footnotetext[1]{Assumes expected cost is Lipschitz (see Section~\ref{sec:discrete}).}
\footnotetext[2]{Assumes ``non-flatness'' of expected reward and Lipschitzness on expected cost (see Appendix~\ref{sec:bandit}).}


\subsection{Related Work}
\label{sec:related}

Conformal prediction (CP) was originally developed as a framework for providing distribution-free uncertainty quantification with finite-sample coverage guarantees~\cite{vovk2005algorithmic,angelopoulos2021gentle}. While traditional CP relies on exchangeability, recent work has studied online settings with distribution shift or adversarial behavior~\cite{GibbsC,Gupta22,Feldman22}. In this setting, ACI~\cite{GibbsC} provides a robust mechanism for pathwise validity by updating a calibration parameter so that the average success rate tracks the target $\phi$. However, validity alone does not address efficiency: one also wants the selected sets to have small cost. Efficient policies are known for offline conformal prediction~\cite{gao2025volume}, and low-regret online policies are known under full-information feedback (1D interval selection~\cite{Srinivas26}) and semi-bandit feedback (scoring rules~\cite{ge2025stochastic,WangQ,yang2026onlineconformalpredictionadversarial}). These approaches do not directly apply under the weaker full bandit feedback model considered here, where the learner probes a set and only observes whether the set succeeded, without observing the ground-truth label or point.\footnote{The work of~\cite{WangQ} refers to ``bandit feedback'', but their feedback model is closer to semi-bandit feedback: the learner probes one label and learns whether it is true, which is a special case of the model in~\cite{ge2025stochastic}. In contrast, we probe a set and only learn whether the label was contained in the set.} 

As mentioned before, the validity notion we study is also different from the stochastic per-step validity guarantees in the recent online CP literature with stronger feedback~\cite{angelopoulos24a,Srinivas26,ge2025stochastic,WangQ}. Our results are therefore not a replacement for stochastic validity guarantees, but rather show that pathwise ACI-style validity can be combined with stochastic efficiency under substantially weaker feedback. In particular, the validity guarantee is adversarial and sequence-wise, while the efficiency guarantee is stochastic and benchmarked against an optimal distribution-aware policy.

This combination is related in spirit to best-of-both-worlds guarantees, but differs from the usual formulation. When the distribution of inputs can shift over time, lower bounds show that no sublinear regret is possible for adversarial bandits with knapsacks~\cite{Immorlica22}; analogous lower bounds have been established for online conformal prediction even with full-information feedback~\cite{Srinivas26}.\footnote{The contemporaneous work of~\cite{yang2026onlineconformalpredictionadversarial} side-steps this impossibility by scalarizing miscoverage and inefficiency into a single loss function, while scaling the inefficiency penalty by $O(T^{-1})$. Consequently, unlike our work, they do not provide separate sublinear regret bounds for adversarial validity and stochastic efficiency relative to an optimal valid baseline.} We therefore study the intermediate regime of adversarial validity and stochastic efficiency. This is also distinct from recent decaying-step-size ACI results~\cite{angelopoulos24a,areces2025online}, which provide best-of-both-worlds guarantees for the granularity of coverage, and from best-of-both-worlds bandit algorithms~\cite{Zimmert19a,ZimmertL}, which seek simultaneous guarantees for the same performance measure.

Our bandit formulation is related to discretization approaches in dynamic pricing~\cite{KleinbergL}. When conformal selection is parameterized by a scalar score threshold, one can discretize the calibration parameter and treat each discretized threshold as an arm, much as a pricing algorithm discretizes possible prices. For dynamic pricing, $\Omega(T^{2/3})$ regret lower bounds are known.  Conformal selection differs from dynamic pricing and reward-maximization bandits because each arm plays two roles: it produces a success indicator used for validity and incurs a resource cost used for efficiency. The goal is therefore not simply to maximize reward, but to simultaneously maintain pathwise validity and minimize stochastic cost.

This makes the closest algorithmic connection to primal-dual stochastic optimization, bandits with knapsacks, and constrained bandits~\cite{BwK,AgarwalD,ding2020,Cayci_Zheng_Eryilmaz_2022,pmlr-v247-guo24a,Slivkins_context}. These methods typically stabilize dual variables through confidence-adjusted estimates, smoothed updates, projections, floors, or feasibility assumptions such as Slater conditions. Such tools are useful for stochastic regret, but  as mentioned before, they are incompatible with ACI-style adversarial validity. Our main technical departure is to keep the ACI update unprojected and stabilize it through explicit boundary actions, which may be suboptimal in the short run but preserve the validity identity. The analysis then adapts Lyapunov-drift arguments from stochastic control and constrained bandits~\cite{Neely,kushner2013stochastic,Cayci_Zheng_Eryilmaz_2022,pmlr-v247-guo24a} to this unprojected, boundary-stabilized system. This also distinguishes our setting from primal-dual online convex optimization with long-term constraints~\cite{mahdavi2012trading}, which assumes full-information feedback and typically uses projected dual updates.

Finally, our algorithms connect to online learning interpretations of ACI. As observed in~\cite{ramalingam2025,angelopoulos24a,angelopoulos2025gradient}, ACI can be viewed as online subgradient descent on the pinball loss, and is also closely related to Robbins--Monro stochastic approximation~\cite{kushner2013stochastic}. Prior work has used this perspective primarily to prove coverage tracking or convergence to a population quantile. We show that the same unprojected update, when applied to an appropriate primal or dual control parameter, can also control resource consumption and yield stochastic efficiency regret bounds under bandit feedback. Related work on conformal prediction under non-exchangeability and time-series shift~\cite{tibshirani2019conformal,barber2023conformal,angelopoulos2023conformal,angelopoulos2025gradient} primarily studies coverage or gradient-tracking guarantees under fuller feedback; incorporating PID-style controllers~\cite{angelopoulos2023conformal} into our updates is an interesting direction for  improving empirical efficiency.

%% file: ICLR/full_new.tex
\section{Online Conformal Selection with Bandit Feedback}
\label{sec:full}

We now present our main result: a general multi-armed bandit model for conformal selection, and the boundary-stabilized primal-dual algorithm for it. 
To keep the model concrete, consider a ride-sharing platform where, given an input request $x_t$, the goal is to find a subset $\mathcal C(x_t)$ of drivers to probe for availability before passing the result to a downstream matching algorithm. Probing more drivers increases the chance that the request is successfully matched, but incurs cost by increasing notification fatigue and marketplace disruption. The calibrating policy may only observe whether some probed driver was eventually matched to the request, and not the identity of the driver or the drivers that would have accepted. A finite collection of possible probing rules, for instance rules based on distance cutoffs, driver-quality cutoffs, or score thresholds, can be treated as bandit arms. This is the basic abstraction we study below. We return to scalar score functions, including this ride-sharing example, and to other instantiations such as multi-dimensional scoring rules and interval selection in Section~\ref{sec:discrete}.

\subsection{The Multi-armed Bandit Model}
\label{sec:mab_model}

We will adapt the multi-armed bandit framework for conformal selection as follows.  We consider a setting with $n$ arms. At each epoch $t$, a success vector $\mathbf{r}_t \in \{0, 1\}^n$ and cost vector $\mathbf{c}_t \in [0, C_{\max}]^n$ are realized, where for simplicity of algebra (and w.l.o.g.), we assume $C_{\max} \ge 1$. We further make the standard assumption that $n = o(T)$. The system plays one arm $i_t$, and observes only the reward $r_{i_t,t}$ and cost $c_{i_t,t}$ of arm $i_t$. This is therefore the standard bandit feedback in literature: the learner only observes the success and cost of the chosen selection rule, and does not observe the full success vector, the full cost vector, or the underlying label/point that generated the feedback.

Our goal is to guarantee adversarial conformal validity while having low stochastic efficiency regret. Formally, we seek the following.

\begin{description}
\item[Conformal Validity:] For a conformal guarantee $\phi \in (0,1)$, we seek a finite sample success rate $ \frac{1}{T} \sum_{t=1}^T  Y_t \rightarrow \phi$ regardless of distributional assumptions on the input sequence. Here $Y_t = r_{i_t,t}$ is the success indicator of the arm actually played at time $t$.
\item[Efficiency of Probing Budget:] In the stochastic setting where the reward vectors $\mathbf{r}_t$ and cost vectors $\mathbf{c}_t$ are drawn i.i.d. from an unknown distribution $\mathcal D$, the strategy should have low regret (on cost) against the optimal policy that minimizes the expected per step cost subject to the constraint that the expected per-step validity is at least $\phi$. We present this benchmark formally in Section~\ref{sec:full_stoch}.
\end{description}

Our {\bf main assumption} is the existence of an arm $i_{\max}$ such that $c_{i_{\max},t} = C_{\max}$ and $r_{i_{\max},t} = 1$ always, and the existence of an arm $i_{\min}$ with $c_{i_{\min},t} = 0$ and $r_{i_{\min},t} = 0$ always; these correspond intuitively to ``choosing everything'' and ``choosing nothing'' respectively. This assumption is natural for the applications studied in Section~\ref{sec:discrete}. For instance, in interval selection, typically, the empty interval and the entire range are available; in scoring rules, the score can be very strict or very lenient, and so on. Finally, we assume $\phi$ is bounded away from both $0$ and $1$ by a constant, which is also a natural assumption for most applications.

\input{ICLR/algorithm}

\input{ICLR/pdproof}

\subsection{Instantiations: Score Thresholds and Intervals}
\label{sec:discrete}

We now show how the finite-arm model above captures two canonical online conformal selection settings studied in recent work. The first is a scalar-threshold scoring-rule setting~\cite{ge2025stochastic,WangQ}, which is the closest analogue of standard conformal calibration with a fixed score function, and its multi-dimensional generalization. The second is online interval selection~\cite{Srinivas26}, which does not require a score function. For simplicity, we focus on the dependence of the regret bound on $T$, omitting the other factors that we treat as constants.

\subsubsection{Fixed Scoring Rules with a Scalar Threshold}
\label{sec:fixed}

We first consider a simple conformal setting that generalizes calibration based on a fixed score function. In this setting, we have a black-box algorithm {\sc ALG}$(x,\tau)$ that takes as input a context $x \in \mathcal{X}$ and a scalar $\tau$. It outputs a success indicator $Y(x,\tau)$, so that either $Y(x,\tau) = 0$ (failure) or $Y(x,\tau) = 1$ (success). We assume this function is monotone in $\tau$ for any fixed $x$, so that there exists $\tau_x \in (\tau_{\min},\tau_{\max})$ such that $Y(x,\tau) = 0$ for $\tau < \tau_x$, and $Y(x,\tau) = 1$ for $\tau \ge \tau_x$. Let $Q = \tau_{\max} - \tau_{\min}$. There is a cost function $C$, so that the algorithm spends cost budget $C(x, \tau)$ when given input context $x$ and parameter $\tau$. We assume that for any fixed $x \in \mathcal X$, the function $C(x, \tau)$ is monotonically non-decreasing in  $\tau$, so that increasing $\tau$ is more costly, but is more likely to result in success. 

There is an online sequence of contexts $x_t$. At each step $t$, we need to choose a parameter $\tau_t$ and run {\sc ALG}$(x_t, \tau_t)$, where $x_t$ itself could be hidden from the calibrating policy; the policy obtains $Y(x_t,\tau_t)$ as feedback. In the finite-arm reduction in this section, the policy also observes the realized cost $C(x_t,\tau_t)$ of the chosen threshold. In Appendix~\ref{sec:bandit}, we analyze a simpler continuous controller that only needs the success feedback and not the cost feedback.

We can view {\sc ALG} as implementing a given calibrated score function, though we have framed it more abstractly. It is easy to check that this setting generalizes the usual classification setting, where at each step, the input is a classification task $(x_t, y_t)$; we observe the instance $x_t$ and seek to find a label set $\mathcal C(x_t)$ that contains the ground truth label $y_t$. There is a non-conformity score function $f(x,y)$, such that a lower score implies $y$ is more likely to be the correct label for $x$. For a parameter $\tau_t$, the conformal routine {\sc ALG} sets $\mathcal C(x_t) = \{ y \mid f(x_t,y) \le \tau_t \}$ and outputs this set. It gets feedback on whether $y_t \in \mathcal C(x_t)$, but not the identity of $y_t$. The cost paid by {\sc ALG} at step $t$ is $|\mathcal C(x_t)|$, the size of the conformal set.

The work of~\cite{ge2025stochastic} considers the same problem with semi-bandit feedback where the ground-truth classification becomes known if it lies in the calibration set; similarly, the work of~\cite{WangQ} picks one label instead of a set of labels. In contrast, our setting simply assumes a black-box algorithm that only provides bandit feedback on whether the ground truth lay within the calibration set, without revealing the corresponding label. Further, we enforce the requirement of worst-case per-sequence calibration~\cite{GibbsC}, as opposed to calibration under an exchangeability assumption on the input sequence.

Our model also encompasses other notions of scoring, for instance, the ride-sharing setting below. 

\begin{example} [Ride-share with a Scoring Function]
\label{eg:ride1}
Consider a ride-sharing application where given an input request $x_t$, the goal is to find a subset $\mathcal C(x_t)$ of drivers to probe and determine availability that can be input to a downstream matching algorithm. Each driver $j$ is mapped to a feature vector $y_j$ (for instance, ETA to the request, number of hours driven, etc.) and is assigned a score $g(x_t; y_j)$ that captures the quality of the match (for instance, the match probability). Given a parameter $\tau_t$, {\sc ALG} chooses $\mathcal C(x_t) = \arg\min_{S} \left\{ |S| \mid \sum_{j \in S} g(x_t; y_j) \ge \tau_t \right\}$, which corresponds to probing those drivers with largest match quality subject to the total quality exceeding threshold $\tau_t$. (If the $\arg\min$ is undefined, we set $S$ to be the set of all drivers.) The cost is $|\mathcal C(x_t)|$, the number of drivers probed, which must be kept small to prevent notification fatigue. Our probing algorithm works with the most limited feedback of whether some driver in $\mathcal C(x_t)$ was eventually matched to the request, but not the identity of the driver, which may be privy to the downstream matching algorithm.
\end{example}

\paragraph{Reduction to a Bandit Instance.}  We model the scalar-threshold problem using the price discretization approach in~\cite{KleinbergL}. We discretize the continuous parameter range $[\tau_{\min}, \tau_{\max}]$ into grid points of width $\delta$, treating each point as an arm. This creates $n = O((\tau_{\max} - \tau_{\min})/\delta)$ total arms. Playing the arm corresponds to choosing that discretized value of $\tau$, yielding reward $Y(x,\tau)$ and cost $C(x,\tau)$ whose values depend on the (possibly unobserved) context $x$. Assuming the expected cost function $c(\tau) = \mathbb E_{x \sim \mathcal D}[C(x,\tau)]$ is Lipschitz continuous with parameter $c = O(1)$, rounding any feasible continuous threshold up to the nearest grid point preserves validity by monotonicity and loses at most an additive $O(\delta T)$ in cost. Applying Theorem~\ref{thm:bandit2} yields an efficiency regret of $\widetilde{O}(\sqrt{T/\delta})$. Choosing $\delta = T^{-1/3}$ balances the two regret terms, achieving an efficiency regret bound of $\widetilde{O}(T^{2/3})$.

\begin{corollary}[Scoring Rules with Bandit Feedback]
\label{cor:score_discrete}
Under the above assumptions, for the scalar-threshold scoring-rule setting, there is a policy via discretizing the scores using $\delta = T^{-1/3}$, which satisfies the pathwise conformal validity guarantee of Theorem~\ref{thm:valid2} and gives expected efficiency regret $\widetilde O(T^{2/3})$,  with constants depending on the problem parameters.
\end{corollary}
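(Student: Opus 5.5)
The plan is to instantiate the finite-arm model of Section~\ref{sec:mab_model} on a grid of thresholds and then invoke Theorem~\ref{thm:valid2} and Theorem~\ref{thm:bandit2}. Let $\tau_{\min}=g_0<g_1<\dots<g_m=\tau_{\max}$ be a grid of width $\delta$, so $m=\lceil Q/\delta\rceil$. The arms are these $n=m+1$ points together with the two boundary arms. Playing arm $g_j$ at time $t$ means running {\sc ALG}$(x_t,g_j)$. The resulting success vector $\mathbf r_t=(Y(x_t,g_j))_j$ and cost vector $\mathbf c_t=(C(x_t,g_j))_j$ are then functions of $x_t$ alone, and the learner sees exactly the coordinates required by the bandit model. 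So if the $x_t$ are i.i.d., the pairs $(\mathbf r_t,\mathbf c_t)$ are i.i.d. as Section~\ref{sec:full_stoch} requires.

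For the boundary arms, I take $i_{\max}$ to be ``choose everything'' (reward $1$, cost $C_{\max}$, where $C_{\max}\ge \sup_x C(x,\tau_{\max})$) and $i_{\min}$ to be ``choose nothing'' (reward $0$, cost $0$). In the classification instance these are the full label set and the empty set. Since $\tau_x<\tau_{\max}$, the grid point $\tau_{\max}$ always succeeds, so $i_{\max}$ is essentially the top grid arm. The bound $n=O(1/\delta)=o(T)$ holds for $\delta=T^{-1/3}$.

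\textbf{Validity.} Theorem~\ref{thm:valid2} is pathwise and holds for arbitrary sequences. It therefore applies verbatim to $Y_t=Y(x_t,\tau_t)$ for any context sequence, and no distributional or Lipschitz assumption is needed.

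\textbf{Efficiency.} Write $\mathrm{OPT}_{\mathrm{cont}}$ for the optimal stochastic policy over continuous thresholds and $\mathrm{OPT}_\delta$ for the benchmark of Theorem~\ref{thm:bandit2} on the discretized instance. The regret splits as
\[
T\cdot\mathrm{ALG}-T\cdot\mathrm{OPT}_{\mathrm{cont}}=\bigl(T\cdot\mathrm{ALG}-T\cdot\mathrm{OPT}_\delta\bigr)+T\bigl(\mathrm{OPT}_\delta-\mathrm{OPT}_{\mathrm{cont}}\bigr).
\]
Theorem~\ref{thm:bandit2} bounds the first term by $\widetilde O(\sqrt{nT\log(nT)})=\widetilde O(\sqrt{T/\delta})$.

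For the second term, take the optimal continuous policy. It is a distribution $\pi$ over thresholds $\tau$, possibly randomized, with $\mathbb E_{\tau\sim\pi}[\Pr_x(\tau\ge\tau_x)]\ge\phi$. Map each $\tau$ to $\lceil\tau\rceil_\delta$, the nearest grid point above it. By monotonicity, $Y(x,\lceil\tau\rceil_\delta)\ge Y(x,\tau)$ for every $x$, so the expected validity constraint is preserved. By Lipschitzness of $c(\tau)$, the expected cost rises by at most $c\,\delta$. This gives $\mathrm{OPT}_\delta\le\mathrm{OPT}_{\mathrm{cont}}+O(\delta)$, so the second term is $O(\delta T)$. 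The total regret is $\widetilde O(\sqrt{T/\delta}+\delta T)$, and $\delta=T^{-1/3}$ makes it $\widetilde O(T^{2/3})$.

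\textbf{Main obstacle.} The delicate step is matching the benchmarks. I need the continuous benchmark to be of the same form as the discrete one in Section~\ref{sec:full_stoch}: a randomized mixture over actions satisfying validity in expectation, possibly including the boundary arms. The rounding argument must also be applied inside that mixture class. I would write the continuous benchmark as an LP over distributions on $[\tau_{\min},\tau_{\max}]\cup\{i_{\min},i_{\max}\}$. Rounding is a pushforward of the mixing measure, so feasibility and the $O(\delta)$ cost loss follow pointwise by monotonicity and Lipschitzness. The boundary arms are left unchanged by the pushforward.
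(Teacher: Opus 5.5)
Your proposal is correct and follows the paper's own reduction. Like the paper, you discretize $[\tau_{\min},\tau_{\max}]$ at width $\delta$, inherit pathwise validity directly from Theorem~\ref{thm:valid2}, and bound regret against the discretized benchmark by Theorem~\ref{thm:bandit2} as $\widetilde O(\sqrt{T/\delta})$. You then pay $O(\delta T)$ for rounding thresholds up, with validity preserved by monotonicity and cost controlled by Lipschitzness, and balance the two terms at $\delta=T^{-1/3}$. Your treatment of the rounding as a pushforward of the randomized benchmark, and your explicit construction of the boundary arms, fill in details the paper leaves implicit without changing the route.
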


\paragraph{Multi-dimensional Scoring Rules.} The multi-armed bandit model is quite versatile. In addition to the settings above, it also captures multi-dimensional scoring rules. For instance, a ride-sharing platform may not possess a single, perfectly calibrated ``match probability'' score for each driver. Instead, candidates may be filtered based on a set of independent criteria, such as a maximum spatial distance ($\tau_{d}$) and a minimum driver quality rating ($\tau_{r}$). Such a setting can also be reduced to a bandit instance via discretizing the 2D space of parameters $(\tau_{d}, \tau_{r})$.

\subsubsection{Interval Selection}
\label{sec:interval_instantiation}

We next show how the same finite-arm model captures online interval selection without assuming a score function. In online learning of a conformal interval~\cite{Srinivas26}, the input is a sequence of 1-D points $y_t \in [0,1]$. The goal each step is to output an interval $I_t \subseteq [0,1]$ before the new point $y_t$ is realized, obtaining feedback only about whether $y_t \in I_t$, but not the point $y_t$ itself. We need to design a policy that is conformally valid, that is, the point lies in the interval $\phi$ fraction of time, and which has minimum volume $|I_t|$. We can map it to our setting by discretizing the set of intervals and treating each interval as an arm. 
This setting generalizes the setting in~\cite{Srinivas26} to bandit feedback on the reward. In both settings, the interval costs are known upfront, though we can generalize to costs being presented online as well.

Suppose we discretize the set of all possible intervals on $[0,1]$ as follows: For a parameter $\delta$, we round the left end-point of the interval down to the nearest multiple of $\delta$, and the right end-point up to the nearest multiple of $\delta$. This maps any continuous interval to one of $n = O(1/\delta^2)$ discrete arms. This rounding inflates the volume (cost) of any interval by at most $2\delta$, leading to a baseline additive regret of $O(\delta T)$. Applying Theorem~\ref{thm:bandit2} now yields a regret bound of $\widetilde{O}(\sqrt{T}/\delta)$. Balancing these two regret terms by setting $\delta = T^{-1/4}$ results in a stochastic efficiency regret of $\widetilde{O}(T^{3/4})$ for this setting.

\begin{corollary}[Online Interval Selection with Bandit Feedback]
\label{cor:interval_discrete}
For the online interval selection setting, under the above assumptions, there is a policy that proceeds via discretizing intervals using $\delta = T^{-1/4}$, which satisfies the pathwise conformal validity guarantee of Theorem~\ref{thm:valid2}, and gives expected efficiency regret $\widetilde O(T^{3/4})$.
\end{corollary}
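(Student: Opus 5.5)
The corollary follows by reducing interval selection to the finite-arm model of Section~\ref{sec:mab_model} and then invoking Theorems~\ref{thm:valid2} and~\ref{thm:bandit2}.

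\textbf{The arms.} Fix $\delta = T^{-1/4}$ and let the arm set be all intervals $[a\delta, b\delta]$ with integers $0 \le a \le b \le 1/\delta$, together with the empty interval. This gives $n = O(1/\delta^2) = O(T^{1/2})$ arms, so $n = o(T)$ as the model requires. Playing an arm means outputting that interval $I$. The reward is $r_{I,t} = \mathbf 1[y_t \in I]$, and the learner observes only this bit. The cost is $c_{I,t} = |I|$, which is known in advance, so cost feedback is free.

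\textbf{Boundary arms.} The full interval $[0,1]$ plays the role of $i_{\max}$: it always succeeds and has cost $C_{\max} = 1$. The empty interval plays the role of $i_{\min}$: it always fails and has cost $0$. The main assumption of Section~\ref{sec:mab_model} therefore holds. Theorem~\ref{thm:valid2} makes no distributional assumption, so applying it to this instance gives pathwise validity on every sequence $y_1,\dots,y_T$.

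\textbf{Efficiency, step 1 (regret on the grid).} Suppose the $y_t$ are i.i.d. Theorem~\ref{thm:bandit2} bounds the expected cost regret against the best stochastic policy over the discretized arms by $\widetilde O(\sqrt{nT\log(nT)}) = \widetilde O(\sqrt{T}/\delta)$.

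\textbf{Efficiency, step 2 (comparing the benchmarks).} This is the step I expect to need the most care. The benchmark in Section~\ref{sec:full_stoch} is the optimal stochastic policy over all continuous intervals: a distribution $\pi$ on intervals with $\mathbb E_{I\sim\pi}\Pr[y \in I] \ge \phi$ that minimizes $\mathbb E|I|$.

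Let $\rho$ map an interval to its rounding, with the left endpoint moved down and the right endpoint moved up to multiples of $\delta$. Then $I \subseteq \rho(I)$, so the coverage $\Pr[y\in\rho(I)] \ge \Pr[y\in I]$ does not decrease, and $|\rho(I)| \le |I| + 2\delta$. The pushforward $\rho_\#\pi$ is therefore a feasible discretized stochastic policy with expected cost at most $\mathrm{OPT} + 2\delta$. Hence the discrete benchmark is within $2\delta$ per step of the continuous one, which adds at most $2\delta T$ to the regret over the horizon.

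One point to check is that the benchmark of Theorem~\ref{thm:bandit2} is the optimum over distributions on arms, not over single arms. A single continuous interval is fine, but the optimal continuous policy may randomize. The pushforward argument handles this, because the constraint is linear in $\pi$ and rounding preserves feasibility interval by interval. No Lipschitz assumption is needed, since the cost $|I|$ is exactly controlled by the rounding.

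\textbf{Combining.} The total regret is $\widetilde O(\sqrt T/\delta) + O(\delta T)$. With $\delta = T^{-1/4}$ both terms are $\widetilde O(T^{3/4})$, which proves the corollary.
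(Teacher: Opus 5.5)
Your proposal is correct and follows the paper's own argument. You round endpoints outward to the $\delta$-grid, preserving coverage and paying at most $2\delta$ per step against the continuous benchmark. You then apply Theorem~\ref{thm:bandit2} with $n=O(\delta^{-2})$ to get $\widetilde O(\sqrt{T}/\delta)$ and balance the two terms at $\delta=T^{-1/4}$. Your pushforward argument for randomized continuous benchmarks and your explicit check of the boundary arms make precise what the paper's one-paragraph sketch leaves implicit.
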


%% file: ICLR/algorithm.tex
\subsection{The Boundary-Rule Stabilized Primal-Dual ACI Algorithm}
We present a primal-dual algorithm inspired by  bandits literature~\citep{BwK,AgarwalD,pmlr-v247-guo24a}. Set $\lambda_1 = 0$. The algorithm below is conformally valid for any $\Lambda > 0$; we will use a specific choice of $\Lambda = \frac{C_{\max}}{1-\phi}$ in our efficiency proof. 

In the description below, $t_i$ will denote the number of steps arm $i$ has been played. Initially, we play each arm $i \notin \{i_{\min}, i_{\max}\}$ once, and set $t_i =1 $ for all these arms. For $i \in \{i_{\min}, i_{\max}\}$, set $t_i = \infty$, since we assume these arms have known rewards and costs, which can be viewed as having played them infinitely many time. We do not update the dual variable during this initialization phase; after initialization, we set $\lambda=0$. For each time $t > n-2$ do:
\begin{enumerate}
    \item \textbf{UCB Estimates.} Let $\mu_{i,t_i}$ and $\chi_{i,t_i}$ denote its empirical mean reward earned and mean cost spent so far respectively. Let $\Delta_{i,t_i} = \sqrt{ \frac{2 \ln(nT)}{t_i}}$.  Compute its UCB estimates~\cite{auer} as 
    $$\bar{R}_{i, t-1} = \mu_{i,t_i} + \Delta_{i,t_i} \qquad \mbox{and} \qquad \bar{C}_{i,t-1} = \chi_{i,t_i} - C_{\max} \cdot \Delta_{i,t_i}.$$ 
    \item \textbf{Primal Selection:} If $\lambda_t \ge \Lambda$, then $i_t = i_{\max}$, else if $\lambda_t \le 0$, then $i_t = i_{\min}$, else $i_t = \arg\min_{i } \left( \bar{C}_{i,t-1} - \lambda_t \bar{R}_{i, t-1} \right)$.
    \item \textbf{Dual Update (ACI):} Observe $Y_t = r_{i_t, t}$. Update $\lambda_{t+1}= \lambda_t + \eta (\phi - Y_t)$.
\end{enumerate}

Note that when $\lambda \le  0$, arm $i_{\min}$ is always chosen so that the drift in $\lambda$ is always positive. Similarly, when arm $i_{\max}$ is chosen, the drift in $\lambda$ is always negative. This means $\lambda \in [-\eta,\Lambda+\eta]$ always. Our regret bound will depend polynomially on $\Lambda$. We justify the choice of $\Lambda$ in Section~\ref{app:pdproof}.

Before presenting the formal analysis, we emphasize the critical role of our boundary rule (setting $i_t = i_{\max}$ or $i_t = i_{\min}$ in Step 2). A direct adaptation of stochastic bandit algorithms~\cite{Cayci_Zheng_Eryilmaz_2022,pmlr-v247-guo24a} would simply play the $\mbox{argmin}$ optimal action always, but project $\lambda_t$ onto $[0,\Lambda]$ to ensure bounded dual variables. However, as noted earlier, this projection breaks the telescoping sum required for proving validity; see the discussion after Theorem~\ref{thm:valid2}. This is not merely a theoretical nuisance; Section~\ref{sec:main_exp} empirically demonstrates how a projected algorithm permanently loses track of long-term validity under distribution shifts. Our boundary rule simulates this projection with a sub-optimal boundary action, which in turn necessitates a different efficiency proof in Theorem~\ref{thm:bandit2}, which addresses the time spent at the boundary.

\subsection{Adversarial Conformal Validity}
The following theorem is a corollary of~\cite{GibbsC}, and shows a worst-case validity error of $O(\Lambda/\sqrt{T})$ assuming $n = o(T)$. Our main contribution is not this result; it is in showing that the update rule guaranteeing such validity also preserves stochastic efficiency.

\begin{theorem}
\label{thm:valid2}
For any reward sequence and any interval $[\ell,\ell+L-1] \subseteq [n-1,T]$, the Primal-Dual ACI algorithm satisfies
\[
\left|\frac{1}{L} \sum_{t=\ell}^{\ell+L-1} Y_t - \phi \right|
\le \frac{\Lambda + 2\eta}{\eta L}
= O\!\left(\frac{\Lambda}{\eta L}\right).
\]
\end{theorem}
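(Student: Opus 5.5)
The plan is to use the exact telescoping identity of the unprojected ACI update, combined with the a priori bound $\lambda_t \in [-\eta, \Lambda+\eta]$ supplied by the boundary rule. For every $t$ after initialization, the dual update $\lambda_{t+1} = \lambda_t + \eta(\phi - Y_t)$ holds with no projection. Summing over $t = \ell, \dots, \ell+L-1$ therefore gives
\[
\lambda_{\ell+L} - \lambda_\ell = \eta \sum_{t=\ell}^{\ell+L-1} (\phi - Y_t) = \eta L\left(\phi - \frac{1}{L}\sum_{t=\ell}^{\ell+L-1} Y_t\right).
\]
Dividing by $\eta L$ and taking absolute values reduces the claim to showing $|\lambda_{\ell+L} - \lambda_\ell| \le \Lambda + 2\eta$.

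The key step is to prove the invariant $\lambda_t \in [-\eta, \Lambda + \eta]$ for all $t$ after initialization, by induction starting from $\lambda = 0$. For the lower end, suppose $\lambda_t \le 0$. Then the primal rule plays $i_{\min}$, whose reward is always $0$, so $\lambda_{t+1} = \lambda_t + \eta\phi > \lambda_t$. Now suppose instead $\lambda_t > 0$. Then $\lambda_{t+1} \ge \lambda_t - \eta(1-\phi) > -\eta$. Hence once $\lambda$ is at least $-\eta$ it never drops below $-\eta$.

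The upper end is symmetric. If $\lambda_t \ge \Lambda$, the rule plays $i_{\max}$, whose reward is always $1$, so $\lambda_{t+1} = \lambda_t - \eta(1-\phi) < \lambda_t$. If $\lambda_t < \Lambda$, then $\lambda_{t+1} \le \lambda_t + \eta\phi < \Lambda + \eta$. Together these give the invariant.

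With the invariant, any two values of $\lambda$ differ by at most $\Lambda + 2\eta$. This yields the bound $\frac{\Lambda+2\eta}{\eta L}$ directly, and the $O(\Lambda/(\eta L))$ form follows when $\eta \lesssim \Lambda$.

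The only delicate point is indexing. The statement requires $\ell \ge n-1$, i.e.\ the interval lies in the post-initialization phase, so that every step in it uses the ACI update and the invariant applies. I would check that the initialization does not touch $\lambda$ and that $\lambda$ starts at $0 \in [-\eta, \Lambda+\eta]$. I expect no genuine obstacle beyond this bookkeeping. The substantive point to emphasize is that the boundary actions keep $\lambda$ bounded through the actual rewards of $i_{\min}$ and $i_{\max}$ rather than through a projection. A projection would add residual terms to the telescoping sum, which is exactly the failure mode described before the theorem.
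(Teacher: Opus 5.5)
Your proposal is correct and follows the paper's proof: telescope the unprojected update $\lambda_{t+1}=\lambda_t+\eta(\phi-Y_t)$ over the interval, then bound $|\lambda_{\ell+L}-\lambda_\ell|\le\Lambda+2\eta$ using the invariant $\lambda_t\in[-\eta,\Lambda+\eta]$. The only difference is that you prove this invariant by an explicit induction from the boundary rule, whereas the paper simply asserts it in the remark after the algorithm.
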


\begin{proof}
Let $u = \ell + L$. By the update rule $\lambda_{t+1} = \lambda_t + \eta(\phi - Y_t)$, we have
\[
\lambda_u - \lambda_\ell
= \sum_{t=\ell}^{u-1} (\lambda_{t+1} - \lambda_t)
= \eta \sum_{t=\ell}^{u-1} (\phi - Y_t).
\]
Rearranging gives
\[
\frac{1}{L} \sum_{t=\ell}^{\ell+L-1} Y_t
= \phi - \frac{\lambda_u - \lambda_\ell}{\eta L}.
\]
Since $\lambda_t \in [-\eta, \Lambda+\eta]$ for all $t$, we have
$|\lambda_u - \lambda_\ell| \le \Lambda + 2\eta$.
Substituting yields the result.
\end{proof}

As a corollary, for $\eta = 1/\sqrt{T}$, we have
\[
\left| \frac{1}{L} \sum_{t=\ell}^{\ell+L-1} Y_t - \phi \right|
\le \frac{\Lambda \sqrt{T} + 2}{L}.
\]
Thus for $L = \omega(\sqrt{T})$, the average validity error is $o(1)$.
In particular, for $L = T$, the cumulative deviation
$\left|\sum_{t=1}^T Y_t - \phi T\right|$ is $O(\sqrt{T})$. In the typical setting where $n = o(L)$, the above theorem easily extends to intervals that are subsets of the entire horizon $[1,T]$ instead of $[n+1,T]$.

\subsubsection{Failure of Projected Dual Updates}
\label{sec:failure}
The exact telescoping identity above is the reason we avoid the projected dual updates that
are standard in constrained bandits. To see this, consider the natural projected variant
\[
    \lambda_{t+1}
    =
    \Pi_{[0,\Lambda]}\bigl(\lambda_t+\eta(\phi-Y_t)\bigr).
\]
Define the projection residual
$
    \epsilon_t
    =
    \lambda_{t+1}
    -
    \bigl(\lambda_t+\eta(\phi-Y_t)\bigr).
$
Then
$
    \lambda_{t+1}
    =
    \lambda_t+\eta(\phi-Y_t)+\epsilon_t,
$
and summing over an interval $[\ell,u)$ gives
\[
    \frac{1}{L}\sum_{t=\ell}^{u-1}Y_t-\phi
    =
    -\frac{\lambda_u-\lambda_\ell}{\eta L}
    +
    \frac{1}{\eta L}\sum_{t=\ell}^{u-1}\epsilon_t,
    \qquad L=u-\ell.
\]
The first term is harmless because the projected dual variable is bounded. However, the residual
term need not be small: each projection event can have $|\epsilon_t|=O(\eta)$, and therefore
$
    \frac{1}{\eta L}\sum_{t=\ell}^{u-1}|\epsilon_t|
$
can be $\Theta(1)$ if projection occurs on a constant fraction of the interval, which is possible if the sequence is adversarial. Thus projection may hide a linear amount of coverage debt in the residuals, even though the dual variable itself remains bounded. In contrast, our update is unprojected, so $\epsilon_t\equiv 0$ and the ACI telescoping identity is exact. The boundary actions in Step~2 stabilize the state through the chosen sub-optimal actions rather than by modifying the update.

\paragraph{A concrete bandit instance.}
The residual term in the projected dual update can be linear in a standard bandit instance. We spell this out for the three-arm environment. We also experimentally demonstrate the failure for this instance in Section~\ref{sec:main_exp}.

Fix a target $\phi\in(0,1)$, stepsize $\eta>0$, cap $\Lambda>1$, and cost $\gamma\in(0,1)$. There are three arms. Arm $0$ is the guaranteed-failure arm, with cost $0$ and success $0$. Arm $1$ is the guaranteed-success arm, with cost $1$ and success $1$. Arm $b$ is a cheap trap arm, with cost $\gamma$ and success $Z_t\in\{0,1\}$, where the sequence $(Z_t)$ may change adversarially over time.

Consider the natural projected analog of the primal-dual rule as described above: it maintains $\lambda_t\in[0,\Lambda]$, chooses an arm $i_t\in\arg\min_i\{\widehat c_{i,t}-\lambda_t\widehat p_{i,t}\}$, observes $Y_t=r_{i_t,t}$, and updates $\lambda_{t+1}=\Pi_{[0,\Lambda]}(\lambda_t+\eta(\phi-Y_t))$. The deterministic boundary arms are known exactly, so $(\widehat c_{0,t},\widehat p_{0,t})=(0,0)$ and $(\widehat c_{1,t},\widehat p_{1,t})=(1,1)$. The trap cost is known, so $\widehat c_{b,t}=\gamma$, and assuming the arm has been played sufficiently, its success estimate $\widehat p_{b,t}$ is approximately its empirical success mean from previous plays of arm $b$. The UCB corrections can only strengthen the argument: the boundary arms are evaluated at their exact values, while for the trap arm the optimistic estimates satisfy $\overline C_{b,t}\le \gamma$ and $\overline R_{b,t}\ge \widehat p_{b,t}$, so its UCB Lagrangian score is no larger than $\gamma-\lambda_t\widehat p_{b,t}$.

Let $I=\{\ell,\ell+1,\ldots,\ell+L-1\}$ be an adversarial ``outage'' interval. Suppose that at the beginning of the interval, the trap arm has been observed a large number $N$ of times, all of them successes, and no previous observed failures. Let $\rho=1-(1-\gamma)/\Lambda$, and assume $N/(N+L)>\rho$. During the interval, the adversary sets $Z_t=0$ for every $t\in I$. The lemma below show that on $I$, the projected dual algorithm has constant per-step validity error, and hence does not satisfy pathwise validity.

\begin{lemma}[Three-arm failure of projected ACI]
Under the setup above, the projected rule never plays the guaranteed-success arm on $I$. Hence $Y_t=0$ for all $t\in I$, so the validity error $D_I=\sum_{t\in I}(\phi-Y_t)$ equals $\phi L$. Moreover, if $m=\lceil(\Lambda-\lambda_\ell)/(\eta\phi)\rceil\le L$, then after the first $m$ steps of $I$, the projected dual variable is stuck at $\Lambda$, and every later round has ACI update error $\epsilon_t = -\eta\phi$.
\end{lemma}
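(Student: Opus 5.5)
The plan is an induction over $t \in I$ that compares the Lagrangian scores of the three arms and shows the guaranteed-success arm $1$ always scores strictly worse than the trap arm $b$. The invariant carried through the induction is: at the start of round $t = \ell + k$ (for $0 \le k \le L$), arm $b$ has been observed $N + k_b$ times with exactly $k_b \le k$ failures, and $\lambda_t = \min(\Lambda, \lambda_\ell + k\eta\phi)$. At $k=0$ this holds by the setup. Note that $\lambda_\ell \in [0,\Lambda]$ because the projected rule keeps $\lambda$ in that range.

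\textbf{Step 1 (ruling out arm $1$).} Arm $1$ has score $1 - \lambda_t$. By the remark preceding the lemma, the UCB score of arm $b$ is at most $\gamma - \lambda_t \widehat p_{b,t}$. The invariant gives
\[
\widehat p_{b,t} \ge \frac{N}{N+k_b} \ge \frac{N}{N+L} > \rho = 1 - \frac{1-\gamma}{\Lambda},
\]
so $1 - \widehat p_{b,t} < (1-\gamma)/\Lambda$. Combining this with $\lambda_t \le \Lambda$ yields
\[
\lambda_t (1 - \widehat p_{b,t}) < 1 - \gamma,
\]
which rearranges to $\gamma - \lambda_t \widehat p_{b,t} < 1 - \lambda_t$. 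Hence arm $b$ scores strictly below arm $1$, and arm $1$ is never the argmin. Strictness matters here, since it avoids any ambiguity from tie-breaking. The same inequality covers $\lambda_t = 0$, where it reduces to $\gamma < 1$.

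\textbf{Step 2 (every round fails).} The chosen arm is therefore either arm $0$, which always fails, or arm $b$, which fails because $Z_t = 0$ on $I$. So $Y_t = 0$. If arm $b$ was played, its failure count increases by one and its play count by one, so $\widehat p_b$ is at least $N/(N+k_b+1)$ and the invariant on the empirical mean survives. The unprojected step is $\lambda_t + \eta\phi$, and after projection $\lambda_{t+1} = \min(\Lambda, \lambda_t + \eta\phi)$, which closes the induction. Summing over $I$ gives $D_I = \sum_{t \in I} \phi = \phi L$.

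\textbf{Step 3 (the dual variable sticks at $\Lambda$).} From the invariant, $\lambda_{\ell+k} = \Lambda$ as soon as $k \ge m = \lceil (\Lambda - \lambda_\ell)/(\eta\phi) \rceil$, which happens inside $I$ because $m \le L$. In every later round of $I$, the pre-projection value is $\Lambda + \eta\phi$ and the projection returns $\Lambda$. Hence $\epsilon_t = \Lambda - (\Lambda + \eta\phi) = -\eta\phi$.

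The only delicate point is Step 1, in two respects. First, the empirical success estimate of arm $b$ must stay above $\rho$ throughout the whole interval even as failures accumulate. This is exactly what the worst-case bound $N/(N+L)$ and the hypothesis $N/(N+L) > \rho$ secure. Second, the UCB corrections must only help arm $b$, which is the earlier remark that $\overline C_{b,t} \le \gamma$ and $\overline R_{b,t} \ge \widehat p_{b,t}$.
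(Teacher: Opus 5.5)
Your proposal is correct and follows the paper's proof: bound the trap arm's empirical mean below by $N/(N+L) > \rho$, show arm $1$ is never a Lagrangian minimizer so every round in $I$ fails, and then track the capped recursion $\lambda_{t+1}=\min\{\Lambda,\lambda_t+\eta\phi\}$ to get $\epsilon_t=-\eta\phi$ after $m$ steps. The only difference is that you apply the single strict inequality $\lambda_t(1-\widehat p_{b,t})<1-\gamma$ to all $\lambda_t\in[0,\Lambda]$, whereas the paper splits into the cases $\lambda_t<1$ (where arm $0$ beats arm $1$) and $\lambda_t\ge 1$; your version is valid and slightly cleaner.
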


\begin{proof}
Let $\widehat p_t=\widehat p_{b,t}$ denote the trap arm's empirical success estimate at the beginning of round $t$. We first show that $\widehat p_t>\rho$ throughout the interval. At the start of $I$, the trap arm has $N$ observed successes and no observed failures. During $I$, the trap arm can be played at most $L$ additional times, and every such play is a failure because $Z_t=0$ on $I$. Therefore, at every round $t\in I$, the trap estimate satisfies $\widehat p_t\ge N/(N+L)>\rho$.

Since $\Lambda>1$ and $\gamma<1$, we have $\rho>\gamma$. Indeed, $\rho-\gamma=(1-\gamma)(1-1/\Lambda)>0$. Fix any round $t\in I$. The projected update guarantees $\lambda_t\in[0,\Lambda]$. The Lagrangian scores of the three arms are $S_0=0$, $S_b \le \gamma-\lambda_t\widehat p_t$, and $S_1=1-\lambda_t$, with smaller score preferred.

If $\lambda_t<1$, then $S_0=0<1-\lambda_t=S_1$, so the guaranteed-success arm is not a minimizer. If $\lambda_t\ge 1$, then the trap arm strictly beats the guaranteed-success arm. To see this, $S_b<S_1$ is equivalent to $\lambda_t(1-\widehat p_t)<1-\gamma$. This holds because $\lambda_t\le\Lambda$ and $\widehat p_t>\rho$, so $\lambda_t(1-\widehat p_t)<\Lambda(1-\rho)=1-\gamma$. Thus in every possible case, arm $1$ is not a minimizer. 

Therefore the projected rule plays only arm $0$ or arm $b$ on $I$. Hence $Y_t=0$ for every $t\in I$. It follows immediately that $D_I=\sum_{t\in I}(\phi-Y_t)=\phi L$.

It remains to verify the statement about the cap and the residuals. Since $Y_t=0$ throughout $I$, the projected update reduces on this interval to $\lambda_{t+1}=\min\{\Lambda,\lambda_t+\eta\phi\}$. By induction, for every integer $s\in\{0,1,\ldots,L\}$, $\lambda_{\ell+s}=\min\{\Lambda,\lambda_\ell+s\eta\phi\}$. Therefore, if $m=\lceil(\Lambda-\lambda_\ell)/(\eta\phi)\rceil\le L$, then $\lambda_t=\Lambda$ for all $t\in\{\ell+m,\ldots,\ell+L\}$. 

For any round $t\in\{\ell+m,\ldots,\ell+L-1\}$, the unprojected tentative update would be $\Lambda+\eta\phi$, while the projected update returns $\Lambda$. Since $\epsilon_t=\lambda_{t+1}-(\lambda_t+\eta(\phi-Y_t))$, this gives $\epsilon_t=-\eta\phi$ on every capped round. 
\end{proof}

The condition $N/(N+L)>\rho$ simply says that the pre-shift period was long enough that the trap arm's empirical success rate remains high throughout the outage, even if every trap play during the outage fails. For any desired outage length $L$, this can be ensured using $N>\rho L/(1-\rho)$ pre-shift trap successes.

In this example, the projected rule fails because once the dual variable is capped, further failures no longer increase the state. The algorithm can therefore continue to prefer the cheap trap arm, whose estimate is still inflated by its pre-shift history. In the boundary-stabilized algorithm, reaching the upper boundary instead triggers a play of the guaranteed-success arm. Those successes are actual observations, so they ensure conformal validity rather than hiding it inside the conformal error $\epsilon_t$.

\subsection{Stochastic Efficiency}
\label{sec:full_stoch}
For showing stochastic efficiency, we assume the rewards $\mathbf{r}_t$ and costs $\mathbf{c}_t$ are drawn i.i.d. from an unknown distribution $\mathcal D$ with means $p_i = \mathbf{E}[r_{i,t}]$ and $\omega_i = \mathbf{E}[c_{i,t}]$. We assume that for arms $i \neq i_{\min}, i_{\max}$, we have $p_i \in (0,1)$ and $\omega_i \in (0,C_{\max})$. Let $C^*$ be the minimum expected cost per step achievable by a stochastic policy with per-step validity at least $\phi$ in expectation, and let $x^*$  be the corresponding randomized selection of arms:
\begin{equation} \label{eq:lp}
    C^* = \min_{x \in \Delta_n} \sum_{i=1}^n x_i \omega_i \quad \text{s.t.} \quad \sum_{i=1}^n x_i p_i \ge \phi.
\end{equation}
We will treat this optimal policy as the benchmark, and consider regret against $C^*$. Note that as is standard in the constrained bandit literature~\cite{BwK}, because any per-sequence valid policy trivially satisfies validity in expectation, the optimal cost of the baseline lower-bounds the expected cost of any per-sequence valid policy, leading to a stronger benchmark.

We will set $\Lambda = \frac{C_{\max}}{1-\phi}$. Our main result shows a sub-linear bound on the efficiency regret, where we have assumed that $n = o(T)$ and (for convenience of stating the bound that) $C_{\max} \ge 1$.

\begin{theorem}\label{thm:bandit2}
When $\eta = 1/\sqrt{T}$, the Primal-Dual algorithm achieves regret:
\begin{equation*}
    \mathbf{E} \left[ \sum_{t=1}^T (c_{i_t} - C^*) \right] \le   O\left(\frac{C_{\max}}{1-\phi}  \sqrt{nT \log (nT)}\right).
\end{equation*}
\end{theorem}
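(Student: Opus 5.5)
We plan a Lyapunov drift-plus-penalty argument on the unprojected dual variable. The potential is $V(\lambda_t)=\lambda_t^2/2$. Because $\lambda_t\in[-\eta,\Lambda+\eta]$ always, the unprojected update gives the exact identity
\[
\lambda_{t+1}^2-\lambda_t^2 = 2\eta\lambda_t(\phi-Y_t)+\eta^2(\phi-Y_t)^2 .
\]
Summing, and using $\lambda_1=0$ and $V\ge 0$,
\[
\sum_t \eta\lambda_t(Y_t-\phi)\le \tfrac12\eta^2T .
\]
Adding this to the cost sum gives
\[
\mathbf{E}\Big[\sum_t (c_{i_t,t}-C^*)\Big]\le \sum_t \mathbf{E}\big[c_{i_t,t}-\lambda_t Y_t-(C^*-\lambda_t\phi)\big]+\tfrac{\eta T}{2}.
\]
So it suffices to bound the per-step ``Lagrangian regret''
\[
G_t=\omega_{i_t}-\lambda_t p_{i_t}-(C^*-\lambda_t\phi).
\]
Here we condition on the history and use that $\lambda_t$ and $i_t$ are measurable before the fresh i.i.d. draw, so that $\mathbf{E}[c_{i_t,t}-\lambda_tY_t\mid\mathcal F_{t-1}]=\omega_{i_t}-\lambda_tp_{i_t}$. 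By LP duality applied to \eqref{eq:lp}, for every $\lambda\ge 0$ we have $\min_i(\omega_i-\lambda p_i)\le \sum_i x^*_i(\omega_i-\lambda p_i)\le C^*-\lambda\phi$. Hence $G_t\le (\omega_{i_t}-\lambda_tp_{i_t})-\min_i(\omega_i-\lambda_tp_i)$ whenever $\lambda_t\ge0$.

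We then split into three cases according to Step 2.

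\emph{Interior case}, $0<\lambda_t<\Lambda$. On the good event, which we take to be the event that all UCB confidence intervals hold (failure probability $O(1/T)$ by Hoeffding and a union bound over $n$ arms and $T$ counts), the UCB quantities are optimistic for every arm. The argmin choice then gives
\[
\omega_{i_t}-\lambda_tp_{i_t}-\min_i(\omega_i-\lambda_tp_i)\le 2(C_{\max}+\lambda_t)\Delta_{i_t,t_{i_t}}\le 2(C_{\max}+\Lambda+\eta)\Delta_{i_t,t_{i_t}} .
\]
Summing the confidence widths over plays via the standard $\sum_i\sum_{s\le t_i}\sqrt{2\ln(nT)/s}=O(\sqrt{nT\log(nT)})$ bound yields $O(\Lambda\sqrt{nT\log(nT)})$, since $C_{\max}\le \Lambda(1-\phi)$.

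\emph{Lower boundary}, $\lambda_t\le 0$, so $\lambda_t\in[-\eta,0]$. We play $i_{\min}$, with $G_t=-C^*+\lambda_t\phi\le 0$. Strictly, the duality step needs $\lambda\ge0$, but the term is directly nonpositive, so this case contributes nothing.

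\emph{Upper boundary}, $\lambda_t\ge\Lambda$. We play $i_{\max}$, and
\[
G_t=C_{\max}-\lambda_t-C^*+\lambda_t\phi=C_{\max}-\lambda_t(1-\phi)-C^*\le C_{\max}-\Lambda(1-\phi)-C^*\le 0 .
\]
This is exactly where the choice $\Lambda=C_{\max}/(1-\phi)$ is used: at the upper boundary, the boundary action is no worse in Lagrangian terms than the benchmark.

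With $\eta=1/\sqrt T$ the drift term is $\sqrt T/2$. Initialization costs $O(nC_{\max})=o(\sqrt{nT})$, and the bad event costs $O(C_{\max})$ in expectation, since its probability is $O(1/T)$ and the cost is at most $C_{\max}T$. Summing all pieces gives the stated $O(\frac{C_{\max}}{1-\phi}\sqrt{nT\log(nT)})$.

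The step we expect to be the main obstacle is the interaction between the optimism argument and the boundaries. The paper's claim that the boundary action is ``sub-optimal in the short run'' hints that a naive analysis might charge boundary steps against $C^*$ directly. Our resolution is the Lagrangian comparison above: both boundary actions have $G_t\le 0$ precisely because $\lambda_t\le0$ or $\lambda_t(1-\phi)\ge C_{\max}$. A second subtlety is the measurability of $\lambda_t$ relative to the i.i.d. draw at time $t$ when the confidence event is defined on the whole trajectory. We would handle this by bounding $\mathbf{E}[G_t]$ on the good event and absorbing the complement crudely, using $|G_t|\le C_{\max}+\Lambda+\eta$.
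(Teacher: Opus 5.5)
Your proposal is correct and follows essentially the same argument as the paper. Both use the $\lambda_t^2$ drift bound, the UCB-optimism comparison against the LP solution $x^*$ in the interior, and $\Lambda(1-\phi)=C_{\max}$ at the upper boundary. The only difference is bookkeeping: you show the per-step Lagrangian gap satisfies $G_t\le 0$ at both boundaries, whereas the paper separates $T_H=\{t:\lambda_t\ge\Lambda\}$ and cancels $C_{\max}\,\mathbf{E}[|T_H|]$ against the $\Lambda(1-\phi)\,\mathbf{E}[|T_H|]$ term taken from the drift inequality. That is the same cancellation, done in aggregate rather than per step.
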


 The proof proceeds via using the squared dual variable as a Lyapunov function and bounding its drift. It gets involved because of the need to handle the boundary cases in Step 2, and our bound depends inversely on $1-\phi$, the price we pay for adversarial validity relative to  bandit regret bounds in~\cite{pmlr-v247-guo24a,BwK}.

%% file: ICLR/pdproof.tex
\subsubsection{Proof of Theorem~\ref{thm:bandit2}}
\label{app:pdproof}
Our proof uses the following claim about UCB estimates whose proof follows from a standard application of Hoeffding's inequality and union bounds, and is implicit in~\cite{auer}.

\begin{claim}[~\cite{auer}] \label{claim:auer}
With probability $1-1/T$, simultaneously for all arms $i$ and all number of pulls $s \le T$, we have
\begin{equation} \label{eq:ucb}
\mu_{i,s} - \Delta_{i,s} \le p_i \le \mu_{i,s} + \Delta_{i,s} \qquad \mbox{and} \qquad  \chi_{i,s} - C_{\max} \cdot \Delta_{i,s} \le \omega_i \le \chi_{i,s} + C_{\max} \cdot \Delta_{i,s} 
\end{equation}
\end{claim}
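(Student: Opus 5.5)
The plan is a direct Hoeffding bound for each fixed pair $(i,s)$, followed by a union bound over all arms, all pull counts $s \le T$, and the two estimated quantities (reward and cost). The arms $i_{\min}, i_{\max}$ are trivial: their rewards and costs are deterministic and known, so $\mu_{i,s}=p_i$ and $\chi_{i,s}=\omega_i$ exactly. It therefore suffices to treat the remaining arms.

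\textbf{Handling adaptivity.} The only subtlety is that the times at which arm $i$ is pulled are chosen adaptively by the algorithm. I would use the standard ``reward table'' coupling. Because $(\mathbf r_t,\mathbf c_t)$ is i.i.d.\ across $t$ and independent of the past, the $k$-th observation of arm $i$ is a fresh draw from the marginal of $(r_{i,t},c_{i,t})$, independent of everything observed earlier. Equivalently, one may pre-sample, for each arm $i$, an i.i.d.\ sequence $(X_{i,k},W_{i,k})_{k=1}^T$ from that marginal, and have the algorithm read the $k$-th entry on its $k$-th pull of $i$. This does not change the joint law of the algorithm's trajectory. Under the coupling, $\mu_{i,s}$ and $\chi_{i,s}$ are the averages of the first $s$ entries of a fixed i.i.d.\ sequence, regardless of when those pulls happened. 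An alternative, if one prefers not to couple, is to apply Azuma--Hoeffding to the martingale of centered observations indexed by pull count.

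\textbf{Hoeffding bounds.} For fixed $i$ and $s$, rewards lie in $[0,1]$, so Hoeffding gives
$\Pr\bigl(|\mu_{i,s}-p_i|>\Delta_{i,s}\bigr)\le 2\exp(-2s\Delta_{i,s}^2)=2\exp(-4\ln(nT))=2(nT)^{-4}$.
Costs lie in $[0,C_{\max}]$, so the same computation with deviation $C_{\max}\Delta_{i,s}$ gives
$\Pr\bigl(|\chi_{i,s}-\omega_i|>C_{\max}\Delta_{i,s}\bigr)\le 2(nT)^{-4}$.

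\textbf{Union bound.} Summing over at most $n$ arms, $s\le T$, and the two quantities, the total failure probability is at most $4nT\cdot(nT)^{-4}=4(nT)^{-3}\le 1/T$ whenever $nT\ge 2$. On the complementary event, both two-sided inequalities in \eqref{eq:ucb} hold simultaneously for every $i$ and every $s\le T$, as claimed.

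The only step needing care is the adaptivity issue, since the rest is routine; the coupling above resolves it.
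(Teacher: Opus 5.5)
Your proof is correct and takes the same route the paper indicates: a Hoeffding bound for each fixed $(i,s)$, giving $2(nT)^{-4}$ for the reward deviation and $2(nT)^{-4}$ for the cost deviation, then a union bound over arms, pull counts, and the two quantities, for a total failure probability of $4(nT)^{-3}\le 1/T$. The paper only says this is standard and implicit in~\cite{auer}, so your treatment of adaptive sampling (via the reward-table coupling, or Azuma--Hoeffding) fills in a detail the paper leaves unstated rather than changing the argument.
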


\begin{proof}[Proof of Theorem~\ref{thm:bandit2}]
The initial step of playing each arm once loses an additive $n C_{\max}$ to the regret that we absorb into the bounds below.  We analyze the dual potential $V_t = \lambda_t^2$. The ACI update $\lambda_{t+1} = \lambda_t + \eta (\phi - Y_t)$ implies:
\begin{equation*}
    \mathbf{E}[V_{t+1} - V_t \mid \mathcal{F}_{t-1}] = \mathbf{E}[2\eta \lambda_t (\phi - Y_t) + \eta^2 (\phi - Y_t)^2 \mid \mathcal{F}_{t-1}] \le 2\eta \lambda_t (\phi - p_{i_t}) + \eta^2.
\end{equation*}
Summing from $t=1$ to $T$ and telescoping, and assuming $\lambda_1 = 0$ (so that $V_1 = 0$), we have:
\begin{equation} 
\label{eq:drift}
    \mathbf{E} \left[ \sum_{t=1}^T \lambda_t (p_{i_t} - \phi) \right] \le \frac{\mathbf{E}[V_1 - V_{T+1}]}{2\eta} + \frac{T \eta}{2} \le  \frac{T \eta}{2}.
\end{equation}

We split time into steps $T_H = \{t | \lambda_t \ge \Lambda\}$, and $T_L = [1,T] \setminus T_H$, and develop separate bounds on Lyapunov drift for both. From Eq~(\ref{eq:drift}), we have
\begin{equation} 
\label{eq:drift2}
    \mathbf{E} \left[ \sum_{t \in T_L} \lambda_t (p_{i_t} - \phi) \right] + \mathbf{E} \left[ \sum_{t \in T_H} \lambda_t (p_{i_t} - \phi) \right] \le  \frac{T \eta}{2}.
\end{equation}

\noindent {\bf Case 1: $t \in T_L$.} Let $\mathcal E$ denote the event in Claim~\ref{claim:auer} holds. We will show
\begin{equation} \label{eq:lagrangian_step}
    \omega_{i_t} - C^* \le \lambda_t (p_{i_t} - \phi) +  2 \left(|\lambda_t| + C_{\max}\right) \cdot \Delta_{i_t,t_{i_t}} + \mathbf{1}_{\mathcal E^c} \left(C_{\max} + \Lambda\right).
\end{equation}

Note that in the event $\mathcal E^c$, we have $\omega_{i_t} - C^* \le C_{\max}$, and $\lambda_t (p_{i_t} - \phi) \ge -\Lambda$, so that the above inequality trivially holds. We therefore assume $\mathcal E$ holds.

First assume $\lambda_t > 0$. For $t \in T_L$, by the optimality of the selection rule, for all arms $i$:
\begin{equation*}
    \bar{C}_{i_t,t-1} - \lambda_t \bar{R}_{i_t, t-1} \le \bar{C}_{i,t-1}  - \lambda_t \bar{R}_{i, t-1}.
\end{equation*}
From Eq~(\ref{eq:ucb}), we have
$ \bar{R}_{i, t-1} = \mu_{i,t_i} + \Delta_{i,t_i} \ge p_i$ and $\bar{C}_{i,t-1} = \chi_{i,t_i} - C_{\max} \cdot \Delta_{i,t_i} \le \omega_i.$
Plugging these bounds into the previous inequality and noting $\lambda_t \ge 0$, we have for all arms $i$:
$$ \bar{C}_{i_t,t-1} - \lambda_t \bar{R}_{i_t, t-1} \le \omega_i - \lambda_t p_i.$$
Using Eq~(\ref{eq:ucb}) for arm $i_t$, we have
$$ \bar{R}_{i_t, t-1} = \mu_{i_t,t_{i_t}} + \Delta_{i_t,t_{i_t}} \le p_{i_t} + 2 \Delta_{i_t,t_{i_t}} $$
and
$$\bar{C}_{i_t,t-1} = \chi_{i_t,t_{i_t}} - C_{\max} \cdot \Delta_{i_t,t_{i_t}} \ge \omega_{i_t} - 2 C_{\max} \cdot \Delta_{i_t,t_{i_t}}.$$
Substituting the bounds into the previous inequality, we finally have:
\begin{equation*}
    \omega_{i_t} - \lambda_t p_{i_t} \le \omega_i - \lambda_t p_i + 2 \left(\lambda_t + C_{\max}\right) \cdot \Delta_{i_t,t_{i_t}}  \qquad \forall i.
\end{equation*}
Let $x^*$ be the optimal primal solution to Eq~(\ref{eq:lp}). Taking the expectation over $x^*$ on the RHS, and noting that $\sum_i x^*_i \omega_i \le C^*$ and $\sum_i x^*_i p_i \ge \phi$, we have:
\begin{equation*} 
    \omega_{i_t} - C^* \le \lambda_t (p_{i_t} - \phi) +  2 \left(|\lambda_t| + C_{\max}\right) \cdot \Delta_{i_t,t_{i_t}}.
\end{equation*}
This shows Eq~(\ref{eq:lagrangian_step}) when $\lambda_t > 0$.

Next, when $\lambda_t \le 0$, the algorithm always plays $i_{\min}$, meaning $\omega_{i_t} = 0$ and $p_{i_t} = 0$. In this case, Eq~(\ref{eq:lagrangian_step}) evaluates to $-C^* \le -\lambda_t \phi + 2(|\lambda_t| + C_{\max})\Delta_{i_t, t_{i_t}}$. Since $C^* \ge 0$ and $-\lambda_t > 0$, this inequality holds trivially. Therefore, Eq~(\ref{eq:lagrangian_step}) holds for all $t \in T_L$.

\noindent {\bf Case 2: $t \in T_H$.} We have $\lambda_t \ge \Lambda > 0$ and $p_{i_t} = 1 > \phi$, so that
$
\Lambda(1 - \phi) \le  \lambda_t (p_{i_t} - \phi)
$. 
Summing this over $t \in T_H$ and substituting into Eq~(\ref{eq:drift2}) and using $\eta = 1/\sqrt{T}$, we  obtain:
\begin{equation} \label{eq:dual_drift} 
\mathbf{E} \left[ \sum_{t \in T_L} \lambda_t (p_{i_t} - \phi) \right] + \mathbf{E}[|T_H|] \cdot \Lambda(1 - \phi) \le  \frac{\sqrt{T}}{2}. 
\end{equation}

\noindent {\bf Bounding the Regret.}
We now bound the total cost regret as follows:
\begin{align*}
    \mbox{Regret} & = \mathbf{E} \left[ \sum_{t =1}^T (c_{i_t} - C^*) \right] 
      = \mathbf{E} \left[ \sum_{t \in T_L} (c_{i_t} - C^*) \right] + \mathbf{E} \left[ \sum_{t \in T_H} (c_{i_t} - C^*) \right] \\
    & \le \mathbf{E} \left[ \sum_{t \in T_L} (c_{i_t} - C^*) \right] + C_{\max} \cdot \mathbf{E}[|T_H|] \\
    &\le \mathbf{E} \left[ \sum_{t \in T_L} \lambda_t (p_{i_t} - \phi) \right] + 2  \left(\Lambda + C_{\max} \right) \mathbf{E} \left[ T \cdot \mathbf{1}_{\mathcal E^c} + \sum_{i=1}^n \sum_{t_i=1}^{T_i} \sqrt{ \frac{2 \ln(nT)}{t_i}} \right] + C_{\max} \cdot \mathbf{E}[|T_H|] \\
    &\le  \frac{\sqrt{T}}{2} - \mathbf{E}[|T_H|] \cdot \Lambda(1 - \phi) +  \Lambda  \cdot O\left(\sum_{i=1}^n \sqrt{T_i \log (nT)}\right) + C_{\max} \cdot \mathbf{E}[|T_H|]  \\
    & = \frac{\sqrt{T}}{2}+ O\left(\Lambda \sqrt{nT \log (nT)} \right) + \mathbf{E}[|T_H|] \cdot \left(C_{\max} - \Lambda (1-\phi) \right) = O\left(\Lambda \sqrt{nT \log (nT)} \right).
\end{align*}
Here, the first inequality follows since $c_{i_t} \le C_{\max}$ always and $C^* \ge 0$; the second inequality follows by summing Eq~(\ref{eq:lagrangian_step}) over $T_L$ (observing that $t_{i}$ increases by $1$ whenever arm $i$ is played) and using $|\lambda_t| \le \Lambda$ for $t \in T_L$; the third inequality follows by substituting Eq~(\ref{eq:dual_drift}), using $\Lambda = \frac{C_{\max}}{1-\phi} \ge C_{\max}$, and $\Pr[\mathcal E^c] \le 1/T$. Note that since \(i_t\) and the events \(t\in T_L,T_H\) are \(\mathcal F_{t-1}\)-measurable, and the cost vector at time \(t\) is independent of \(\mathcal F_{t-1}\), we have
$
\mathbb E[\mathbf 1_{t\in T_L}(c_{i_t,t}-C^*)]
=
\mathbb E[\mathbf 1_{t\in T_L}(\omega_{i_t}-C^*)].
$
The penultimate equality follows  since $\sum_{i=1}^n T_i = T$ always, so that $\sum_{i=1}^n \sqrt{T_i} \le \sqrt{nT}$ by Jensen's inequality; and the final equality follows since $\Lambda (1-\phi) =  C_{\max}$, so that $C_{\max} - \Lambda (1-\phi) \le 0$. 
\end{proof}

%% file: ICLR/experiment.tex
\subsection{Empirical Study} 
\label{sec:main_exp}
Our main goal in the paper is to show algorithms that achieve provable theoretical bounds on efficiency regret for adversarial conformal coverage in bandit  settings. Nevertheless, we present a set of simulations on synthetic and real data of the presented algorithms.  Our goal in the experiment below is mainly to empirically verify the two core claims in the paper -- that these algorithms achieve path-wise conformal validity and sublinear stochastic efficiency, and the unprojected ACI update is needed for validity. 

\subsubsection{Interval Selection} 
We consider the interval selection setting in Section~\ref{sec:discrete}. We fix $\phi=0.8$ and discretization parameter $\delta=0.05$, so that the set of discretized intervals is fixed. The input points are drawn from ${\tt Beta}(2,5)$ on $[0,1]$. At each step, the only feedback the algorithm obtains is whether the point lies in the chosen interval. Each discretized interval is treated as a bandit arm, and we implement the algorithm in Section~\ref{sec:full}.

For this fixed grid, we compute the stochastic benchmark $C^*$ exactly from the LP in Eq.~\eqref{eq:lp}, using the true success probabilities of the discretized intervals. We run the algorithm over multiple seeds and horizons, with learning rates $\eta=a/\sqrt{T}$ for $a\in\{1,5,20\}$. The results are shown in Figure~\ref{fig:ablation_eta}. The left panel shows convergence of the cumulative success rate to $\phi$, the middle panel shows cumulative efficiency regret relative to $C^*$, and the right panel shows final regret across horizons on log-log axes with fitted empirical slopes. Since $\delta$ is fixed in this experiment, this scaling corresponds to the finite-arm theorem, where the predicted dependence is $\widetilde O(\sqrt{T})$ up to logarithmic factors. Increasing $\eta$ accelerates validity convergence, but increases resource consumption.

\begin{figure}[htbp]
    \centering
    \includegraphics[width=\textwidth]{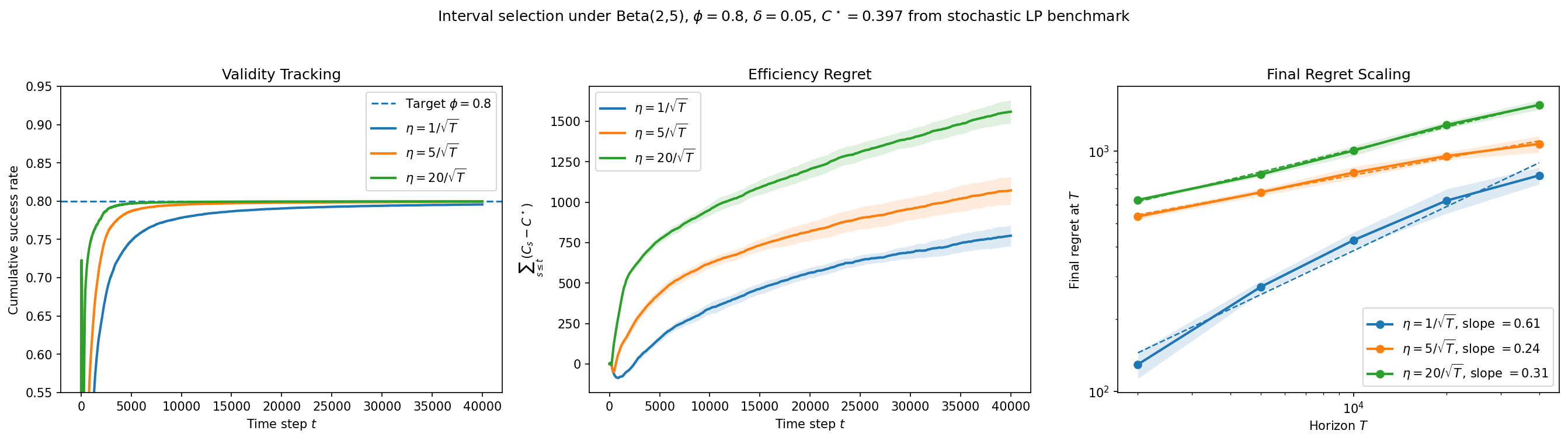}
    \caption{Interval-selection experiment under ${\tt Beta}(2,5)$ inputs with $\phi=0.8$ and $\delta=0.05$.}
    \label{fig:ablation_eta}
\end{figure}

Figure~\ref{fig:projected-validity-cost} compares our boundary-stabilized algorithm with a projected primal-dual baseline adapted from stochastic bandit literature~\cite{pmlr-v247-guo24a} that projects the dual variable into $[0,\Lambda]$ and always plays the Lagrangian optimum arm. We use the same stochastic interval-selection setting, with Beta$(2,5)$ inputs, $\phi=0.8$, $\delta=0.05$, and $\eta=20/\sqrt{T}$. The projected baseline has lower cumulative cost, but also under-covers: its cumulative success rate stays below the target $\phi$. Thus the apparent cost improvement comes from failing the validity requirement. This illustrates why replacing projection with boundary actions is important for preserving ACI-style validity, even in the stochastic setting.

\begin{figure}[htbp]
    \centering
    \includegraphics[width=0.85\linewidth]{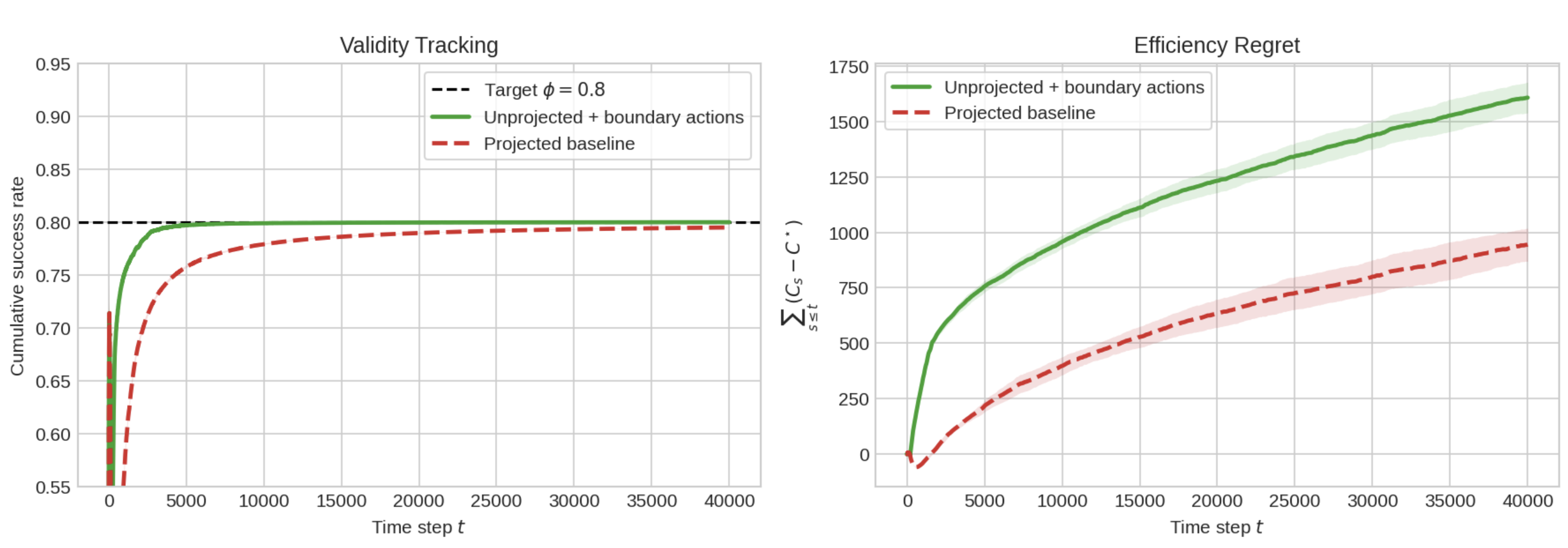}
    \caption{Projected baseline diagnostic in the stochastic interval-selection experiment.}
    \label{fig:projected-validity-cost}
\end{figure}

\subsubsection{Projection versus Boundary-Stabilized ACI} 
We now stress-test the unprojected dual update with the boundary rule (Section~\ref{sec:full}) under adversarial distribution shifts, and show that while the projected baseline ``forgets'' the coverage requirement, our algorithm gracefully recovers coverage. We first show this on a small synthetic setting where the contrast is stark, and then on a more realistic setting using CIFAR-100 data.

\paragraph{Experiment for setting in Section~\ref{sec:failure}.} We first simulate the adversarial shift on the $n = 3$ arm instance from Section~\ref{sec:failure} to visually demonstrate the failure of projected dual updates. We compare our algorithm against the projected baseline mentioned above. Here, we set $\Lambda = C_{\max}/(1-\phi) = 2$ for target $\phi=0.5$. We also set $\eta = 0.05$ and the time horizon to $T = 2000$. We employ a three-arm environment: a guaranteed safe arm ($c=1$), a cheap ``trap'' arm ($c=0.05$), and a zero-cost arm that always yields reward $0$. The trap arm yields successes ($Y = 1$), but in the time interval $[800,1300]$, an adversary forces it to fail ($Y = 0$).

\begin{figure}[htbp]
    \centering
    \includegraphics[width=0.75\textwidth]{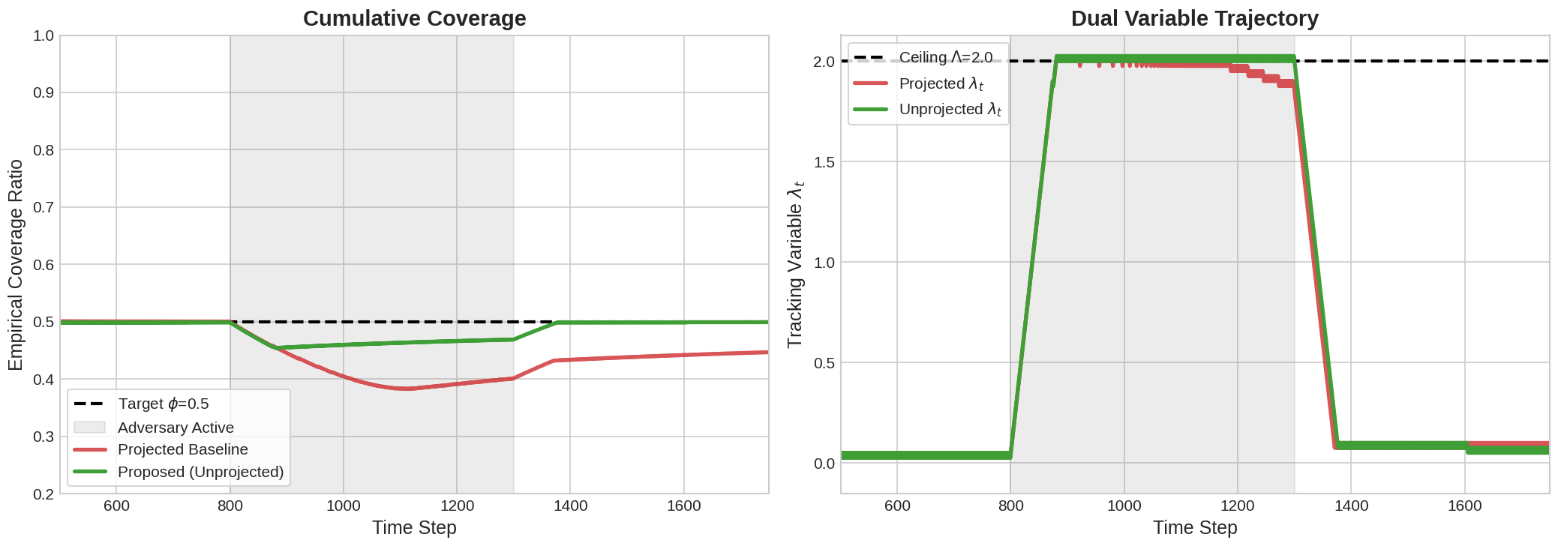}
    \caption{Comparison of projected update vs. boundary rules under an adversarial shift with $\phi = 0.5$.}
    \label{fig:ablation_proj}
\end{figure}

As shown in Fig.~\ref{fig:ablation_proj}, when the adversary strikes, both dual variables rise. The projected baseline  hits the ceiling $\Lambda=2$ and is capped. Because $\lambda_t$ cannot exceed $\Lambda$, it suffers a prolonged sequence of failures from playing the trap arm, whose long term reward still appears good. As a result, even when the environment resets, it cannot recover from this coverage failure. In contrast, for the entire duration when the dual variable takes value $\lambda = 2$, our algorithm plays $i_{\max}$ whose reward is $1$, quickly correcting for the conformal failure, and tracking validity over the time horizon. This confirms that projection destroys adversarial validity, whereas performing unprojected updates with a boundary rule guarantees it.

\paragraph{Experiment on CIFAR-100 dataset.}
We next present a semi-real experiment using CIFAR-100 top-$k$ classification. The goal of this experiment to illustrate the pathwise validity mechanism under a distribution shift using a real prediction-set construction. We use a fixed pretrained classifier to rank the $100$ CIFAR-100 labels for each image. At each time $t$, the learner chooses an arm $k_t \in \{0,5,10,20,50,100\}$, corresponding to returning the top-$k_t$ labels under this classifier, and observes only the success bit
$Y_t = \mathbf{1}\{\mbox{the true label is among the top-}k_t\mbox{ labels}\}$.
The learner does not observe the true label or its rank. The cost is $k_t/100$. 

We induce a distribution shift by ordering real CIFAR-100 examples according to the rank of their true label under the fixed classifier. In the first phase, we sample examples whose true label has rank at most $5$, so that the cheap top-$5$ arm is valid. In the second phase, we sample examples whose true label has rank at least $80$, so that all arms except the guaranteed-success arm $k=100$ fail. Thus the shift is adversarial, but the success/failure outcomes are generated by real classifier rankings on real images. We compare our boundary-stabilized unprojected ACI algorithm with the projected-dual baseline, using target $\phi=0.8$ in Figure~\ref{fig:cifar-stress}.  In the left panel, the projected-dual baseline suffers a long local validity failure after the shift, while boundary-stabilized ACI quickly returns to the target success rate. The middle panel shows the corresponding cumulative coverage debt, $\phi t-\sum_{s=1}^t Y_s$, accumulated by the projected algorithm. In contrast, the unprojected ACI update keeps this debt near zero. The right panel shows the mechanism behind the failure. Before the shift, both methods use the cheap top-$5$ arm. After the shift, boundary-stabilized ACI quickly moves to the guaranteed-success arm ($k = 100$) often enough to maintain validity, while the projected-dual baseline remains near the formerly successful cheap arm for much longer.

\begin{figure}[htbp]
    \centering
    \includegraphics[width=\linewidth]{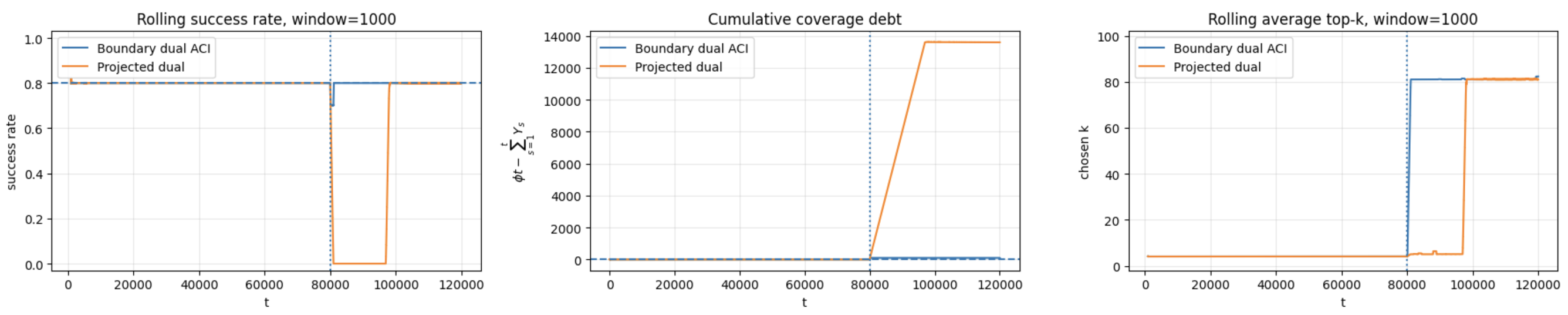}
    \caption{Comparison of projected vs. boundary rules under an adversarial shift on CIFAR-100. 
    }
    \label{fig:cifar-stress}
\end{figure}

%% file: ICLR/conclusion.tex
\section{Conclusion}
\label{sec:conclusion}
We conclude with several open questions. First, it would be interesting to extend the results in Section~\ref{sec:full}  to the case of contextual bandits with varying contexts. Further, it would be interesting to explore whether a stochastic optimality in the limit guarantee via decaying step sizes, 
analogous to the ``best of both worlds'' guarantees for conformal coverage in~\cite{angelopoulos24a}.  Finally, our results can be viewed as addressing adversarial conformal validity for simple MDPs with bandit feedback. It would be interesting to explore the integration of adversarial conformal validity with general Markov Decision Processes, and our duality-based framework provides a promising direction to address this.

%% file: neurips/bandit.tex
\section{Scoring Rules with Scalar Threshold: A Lightweight Algorithm}
\label{sec:bandit}
We now present a simpler ACI algorithm for the scalar scoring rule setting in Section~\ref{sec:fixed}. Unlike the main finite-arm algorithm, which observes both success and cost feedback and competes with the randomized stochastic benchmark, the controller here applies ACI directly to the primal threshold and observes only the success bit.  This leads to a clean ACI implementation where the only ``memory'' the algorithm carries across time is the conformal parameter $\tau$ itself. We show that this simple scheme also achieves pathwise validity and sublinear cost regret, but only under an additional ``non-flatness'' assumption on the rewards.  The regret bound is against an optimal fixed threshold policy and the dependence on $T$ is worse than in Theorem~\ref{thm:bandit2}; however, the algorithm uses less feedback than in that section. 


\paragraph{Setting.} Recall from Section~\ref{sec:fixed} that we have a black-box algorithm {\sc ALG}$(x,\tau)$ that takes as input a context $x \in \mathcal{X}$ and a scalar $\tau$.  It  outputs a success indicator $Y(x,\tau)$, so that either $Y(x,\tau) = 0$ (failure) or $Y(x,\tau) = 1$ (success). We assume this function is monotone in $\tau$ for any fixed $x$, so that there exists $\tau_x \in (\tau_{\min},\tau_{\max})$ such that $Y(x,\tau) = 0$ for $\tau < \tau_x$, and $Y(x,\tau) = 1$ for $\tau \ge \tau_x$. Therefore, $Y(x,\tau_{\max}) = 1$ (guaranteed success) and $Y(x,\tau_{\min}) = 0$ (guaranteed failure). Let $Q = \tau_{\max} - \tau_{\min}$.  There is a cost function $C$, so that the algorithm spends cost budget $C(x, \tau)$ when given input context $x$ and parameter $\tau$. We assume that for any fixed $x \in \mathcal X$, the function $C(x, \tau)$ is monotonically non-decreasing in  $\tau$.

There is an online sequence of contexts $x_t$. At each step $t$, we need to choose a parameter $\tau_t$ and run {\sc ALG}$(x_t, \tau_t)$, where $x_t$ itself could be hidden from the calibrating policy; the policy only obtains $Y(x_t,\tau_t)$ as feedback. The goal is to achieve adversarial validity against $\phi \in (0,1)$, and stochastic efficiency regret.

\subsection{The Primal ACI Algorithm} 
The calibration algorithm is a simple adaptation of the ACI procedure in~\cite{GibbsC}, applied directly to the calibration parameter instead of a dual variable. For parameter $\eta > 0$ to be chosen later, set
$$ \tau_{t+1} \leftarrow \tau_t + \eta \cdot (\phi - Y(x_t,\tau_t)).$$
Note that $\tau_t$ stays naturally bounded within $[\tau_{\min}-\eta, \tau_{\max}+\eta]$ because the drift is always positive at $\tau_{\min}$ ($Y(x_t,\tau_{\min}) = 0 < \phi$) and negative at $\tau_{\max}$ ($Y(x_t,\tau_{\max}) = 1 > \phi$). 

For notational convenience, we extend the definitions of
${\sc ALG}$, $Y$, and $C$ to all real values of $\tau$ by clipping the
threshold to $[\tau_{\min},\tau_{\max}]$ before running the algorithm.
That is, if $\tau\le \tau_{\min}$, then {\sc ALG}$(x,\tau)$ means
{\sc ALG}$(x,\tau_{\min})$; if $\tau\ge \tau_{\max}$, then
{\sc ALG}$(x,\tau)$ means {\sc ALG}$(x,\tau_{\max})$; and otherwise it
has its original meaning. The same convention applies to
$Y(x,\tau)$ and $C(x,\tau)$, and hence to the population functions
$r(\tau)=\mathbb{E}[Y(x,\tau)]$ and $c(\tau)=\mathbb{E}[C(x,\tau)]$.
The internal state $\tau_t$ itself is not projected; only the threshold
passed to {\sc ALG} is clipped. With this convention, the update is
always written as
$\tau_{t+1}=\tau_t+\eta(\phi-Y(x_t,\tau_t))$.

\paragraph{Conformal Validity.} 
The conformal validity proof is analogous to Theorem~\ref{thm:valid2}.
\begin{theorem}[Validity] \label{thm:valid}
For any  $[\ell,u] \subseteq [1,T]$ with $u - \ell = L$, we have $
    \left| \frac{1}{L} \sum_{t=\ell}^{u-1} Y_t - \phi \right| = \frac{|\tau_{u} - \tau_{\ell}|}{\eta L}.$
\end{theorem}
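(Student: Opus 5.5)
The plan is to reproduce the telescoping argument behind Theorem~\ref{thm:valid2}, now applied directly to the primal threshold $\tau_t$ rather than to a dual variable. Write $Y_t = Y(x_t,\tau_t)$, using the clipping convention so that $Y_t$ is defined for every real $\tau_t$. The update $\tau_{t+1}=\tau_t+\eta(\phi-Y_t)$ is never projected, so it holds exactly at every step, and this is the only property of the algorithm we need.

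First, sum the update over $t=\ell,\dots,u-1$. Consecutive terms cancel, which gives
\[
\tau_u-\tau_\ell=\sum_{t=\ell}^{u-1}(\tau_{t+1}-\tau_t)=\eta\sum_{t=\ell}^{u-1}(\phi-Y_t)=\eta\Bigl(L\phi-\sum_{t=\ell}^{u-1}Y_t\Bigr).
\]
Dividing by $\eta L$ yields $\frac{1}{L}\sum_{t=\ell}^{u-1}Y_t-\phi=-\frac{\tau_u-\tau_\ell}{\eta L}$. Taking absolute values on both sides gives exactly the equality claimed in the statement.

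This identity holds for every context sequence, with no distributional assumption. It remains to remark that the right-hand side is small. I will make the boundedness argument precise by induction. If $\tau_t\le\tau_{\min}$, the clipped run is {\sc ALG}$(x_t,\tau_{\min})$, so $Y_t=0$ and $\tau$ increases by $\eta\phi<\eta$. If $\tau_t\ge\tau_{\max}$, then $Y_t=1$ and $\tau$ decreases. Any single step changes $\tau$ by at most $\eta$. Together these show that $\tau_t\in[\tau_{\min}-\eta,\tau_{\max}+\eta]$ for all $t$, provided the initial value lies in $[\tau_{\min},\tau_{\max}]$. Consequently $|\tau_u-\tau_\ell|\le Q+2\eta$, and the validity error is $O(Q/(\eta L))$, in the same form as the corollary following Theorem~\ref{thm:valid2}.

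There is no real obstacle in this argument. The one point needing care is confirming that the clipping convention leaves the internal state unprojected, since it affects only the argument passed to {\sc ALG}. This is what keeps the residual term $\epsilon_t$ from Section~\ref{sec:failure} identically zero, so that the telescoping is exact rather than merely an upper bound.
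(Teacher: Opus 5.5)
Your proposal is correct and matches the paper's argument: the paper proves this, as in Theorem~\ref{thm:valid2}, by telescoping the unprojected update $\tau_{t+1}=\tau_t+\eta(\phi-Y_t)$ over $[\ell,u)$ and dividing by $\eta L$. It then uses the boundary-drift observation $\tau_t\in[\tau_{\min}-\eta,\tau_{\max}+\eta]$ only to get the corollary bound $|\tau_u-\tau_\ell|\le Q+2\eta$; your induction makes that observation more precise, but the stated equality itself does not need it.
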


As a corollary, for the setting of $\eta = \frac{1}{\sqrt{T}}$ used in our efficiency guarantee below, since $|\tau_{u} - \tau_{\ell}| \le Q + 2 \eta$, we have:
$  \left| \frac{1}{L} \sum_{t=\ell}^{u-1} Y_t - \phi \right| \le \frac{Q \sqrt{T} + 2}{L} . $
For $L = \omega(\sqrt{T})$, the validity error is sub-linear in $L$.


\subsection{Efficiency of Cost Budget} 
We now assume $x_t$ is drawn i.i.d. from an unknown distribution $\mathcal D$ for all time steps $t$. Let $c(\tau) = \mathbf E_{x \sim \mathcal D} [C(x, \tau)]$ and $r(\tau) = \mathbf E_{x \sim \mathcal D} [Y(x, \tau)]$. By assumption, these functions are monotonically non-decreasing in $\tau$. In the stochastic setting, the natural benchmark is to compare against the optimal fixed-threshold policy that knows $\mathcal D$, and minimizes expected cost, while satisfying validity in expectation.  Formally, we assume $r(\tau)$ is continuous in $\tau$ and \(r(\tau_{\min})\le \phi\le r(\tau_{\max})\),  so that there is a $\tau^* \in [\tau_{\min}, \tau_{\max}]$ such that $r(\tau^*) = \phi$. If there are many such $\tau$, we define $\tau^*$ as the smallest such value. Let $C^* = c(\tau^*)$. Since the contexts are $i.i.d.$, the policy that always chooses threshold $\tau^*$ is the optimal single-threshold policy that minimizes expected cost while preserving validity in expectation every step. Now define the expected cost regret as
$$R(T) = \mathbf E\left[\sum_{t=1}^T (c(\tau_t) - C^*)^+\right],$$
the positive regret against the expected cost of a stochastic fixed-threshold policy that satisfies expected validity at least $\phi.$ This is a bit stronger than the standard notion of regret, in that bounding this suffices to bound standard regret.


\paragraph{Assumptions.} For our analysis, we assume the functions $r(\tau), c(\tau)$ are such that for constants $c_1, c_2 > 0$:
\begin{enumerate}
    \item \textbf{Success Margin:} For all $\tau > \tau^*$, $r(\tau) - \phi \ge c_1(\tau - \tau^*)$. That is, the expected success probability is strictly increasing as we increase the threshold beyond the optimal point.
    \item \textbf{Cost Lipschitz:} For all $\tau > \tau^*$, $c(\tau) - c(\tau^*) \le c_2(\tau - \tau^*)$. This ensures the cost does not jump discontinuously when we slightly over-provision the threshold, and the reduction in Section~\ref{sec:discrete} to convert the problem to apply Theorem~\ref{thm:bandit2} also uses the same assumption.
\end{enumerate}

The main new assumption over Section~\ref{sec:full} is the Success Margin or ``non-flatness'' assumption on expected reward. It essentially says that the expected reward is strictly increasing around $\tau^*$, and is a more global version of the  ``non-flatness'' assumption on the CDF of the success probability in~\cite{ge2025stochastic}. For instance, in the typical case, $r(\tau)$ is increasing and concave (diminishing returns) for $\tau > \tau^*$. In this case, it is easy to see that one setting of $c_1$ is
$\frac{r(\tau_{\max})-r(\tau^*)}{\tau_{\max} - \tau^*} \ge \frac{1-\phi}{\tau_{\max}},$ which is a constant when $\phi < 1$.  

Note that since the algorithm does not receive feedback on cost, we need an assumption on the reward function in order to achieve bounded regret. Consider for instance if the reward function does not satisfy the Success Margin assumption and is flat around $\tau^*$, say in the interval $[\tau_{\ell}, \tau_h]$. Then, $\tau_t$ could converge to a $\tau_h > \tau^*$, leading to a constant regret in cost per step if $c(\tau)$ is linearly increasing in $\tau$. Therefore, Success Margin is a natural assumption needed in this setting for sub-linear regret. 

Note that Section~\ref{sec:full} does not require the Success Margin assumption since it utilizes feedback on the cost and works with a discrete set of arms.

\paragraph{Efficiency Regret.} Our main result is the following sub-linear regret bound on the cost budget,  and our contribution is to show that these techniques not only yield conformal validity bounds as shown in~\cite{angelopoulos24a}, but also yield efficiency bounds under suitable assumptions.

\begin{theorem}\label{thm:efficient_cont} \label{thm:aci-eff}
For the Primal ACI procedure with $\eta = 1/\sqrt{T}$, we have
$ R(T) \le O(T^{3/4}),$
where the $O(\cdot)$ notation hides constants that depend polynomially on $Q, c_1, c_2$.
\end{theorem}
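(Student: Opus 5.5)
We would prove Theorem~\ref{thm:aci-eff} with a Lyapunov-drift argument on the unprojected primal state, in the style of the proof of Theorem~\ref{thm:bandit2}. The potential is $V_t = (\tau_t - \tau^*)^2$. By the clipping convention, $\tau_t$ always lies in $[\tau_{\min}-\eta,\tau_{\max}+\eta]$, so $V_t \le (Q+2\eta)^2 = O(Q^2)$. Let $\mathcal F_{t-1}$ be the history up to time $t-1$. Because $x_t$ is i.i.d. and $\tau_t$ is $\mathcal F_{t-1}$-measurable, $\mathbf E[Y_t \mid \mathcal F_{t-1}] = r(\tau_t)$. The update $\tau_{t+1}=\tau_t+\eta(\phi-Y_t)$ then gives
\[
\mathbf E[V_{t+1}-V_t \mid \mathcal F_{t-1}] \le -2\eta(\tau_t-\tau^*)(r(\tau_t)-\phi) + \eta^2 .
\]

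The key observation is the sign of the drift term. If $\tau_t > \tau^*$, monotonicity of $r$ and the Success Margin assumption give $(\tau_t-\tau^*)(r(\tau_t)-\phi) \ge c_1(\tau_t-\tau^*)^2 \ge 0$. If $\tau_t \le \tau^*$, then $r(\tau_t) \le r(\tau^*) = \phi$ by monotonicity, so the product is again $\ge 0$. The clipped cases $\tau_t>\tau_{\max}$ and $\tau_t<\tau_{\min}$ also fit this pattern: for the former, $r(\tau_t)=r(\tau_{\max})\ge \phi$ and the margin bound still holds in the form $c_1(\min(\tau_t,\tau_{\max})-\tau^*)(\tau_t-\tau^*)$, and the latter is trivial. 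So every step contributes nonnegative drift. Summing over $t$ and telescoping gives
\[
2\eta\, c_1\, \mathbf E\Big[\sum_t \big((\tau_t-\tau^*)^+\big)^2\Big] \le V_1 + T\eta^2 ,
\]
modulo an $O(\eta)$ slack from the clipping. With $\eta = 1/\sqrt T$ this yields $\mathbf E\big[\sum_t ((\tau_t-\tau^*)^+)^2\big] = O\big((Q^2\sqrt T + \sqrt T)/c_1\big) = O(\sqrt T)$.

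Next we convert this into a cost bound. The regret only counts $(c(\tau_t)-C^*)^+$. When $\tau_t \le \tau^*$, monotonicity of $c$ gives $c(\tau_t) \le C^*$, so that step contributes nothing. When $\tau_t>\tau^*$, Cost Lipschitz gives $c(\tau_t)-C^* \le c_2(\tau_t-\tau^*)^+$; clipping only decreases the effective threshold, so this remains valid. Cauchy--Schwarz then gives
\[
R(T) \le c_2\, \mathbf E\Big[\sum_t (\tau_t-\tau^*)^+\Big] \le c_2 \sqrt{T}\,\Big(\mathbf E\Big[\sum_t ((\tau_t-\tau^*)^+)^2\Big]\Big)^{1/2} = O(T^{3/4}),
\]
with constants polynomial in $Q, c_1, c_2$.

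The main obstacle is justifying the drift inequality uniformly, in particular when $\tau_t$ sits outside $[\tau_{\min},\tau_{\max}]$ or when $\tau^*$ is at or near $\tau_{\max}$. There the margin must be applied to the clipped threshold, and we need care that $(\tau_t-\tau^*)$ and the effective deviation differ by at most $\eta$; this contributes only an extra $O(\eta T\cdot\eta)=O(1)$ term to the telescoped sum. Beyond that, the argument is routine and needs no estimates of $c$: the cost bound follows from controlling how far $\tau_t$ over-provisions above $\tau^*$, and the square-then-Cauchy--Schwarz step is exactly what loses the $T^{1/4}$ factor relative to $\sqrt T$.
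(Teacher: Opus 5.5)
Your proposal is correct and follows essentially the paper's argument. Both use a squared-distance Lyapunov drift bound with the Success Margin to obtain $\mathbf{E}\bigl[\sum_t ((\tau_t-\tau^*)^+)^2\bigr]=O(\sqrt{T})$, then Cost Lipschitz and Cauchy--Schwarz with Jensen to get $O(T^{3/4})$. The differences are minor: you use the two-sided potential $(\tau_t-\tau^*)^2$ and monotonicity of $r$ to make the drift term nonnegative below $\tau^*$, whereas the paper uses the one-sided potential $((\tau_t-\tau^*)^+)^2$ with a crude $4\eta^2$ bound on steps where $\tau_t\le\tau^*+\eta$, paying an extra $c_2\eta T$ in the regret; you also handle the clipped region $\tau_t>\tau_{\max}$ explicitly, which the paper's proof passes over.
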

\begin{proof}
Define the one-sided Lyapunov potential $V_t = ((\tau_t - \tau^*)^+)^2$. We analyze the expected change $\mathbf{E}[V_{t+1} - V_t]$ by considering the position of $\tau_t$:

\paragraph{Case 1: $\tau_t \le \tau^* + \eta$.} 
In this case $V_t \ge 0$ and $V_{t+1} \le 4 \eta^2$, since $\tau_{t+1} \le \tau_t + \eta \le \tau^* + 2\eta$. Therefore, $\mathbf{E}[V_{t+1} - V_t \mid \mathcal{F}_{t-1}] \le 4 \eta^2$.

\paragraph{Case 2: $\tau_t > \tau^* + \eta$.} 
Here $V_t = (\tau_t - \tau^*)^2$. Under the update $\tau_{t+1} = \tau_t + \eta(\phi - Y_t)$, we have:
\begin{align*}
    V_{t+1} &\le ((\tau_t - \tau^*) + \eta(\phi - Y_t))^2 = V_t + 2\eta(\tau_t - \tau^*)(\phi - Y_t) + \eta^2(\phi - Y_t)^2.
\end{align*}
Taking the conditional expectation with respect to $\mathcal{F}_{t-1}$, and using $(\phi - Y_t)^2 \le 1$:
\begin{equation*}
    \mathbf{E}[V_{t+1} - V_t \mid \mathcal{F}_{t-1}] \le 2\eta(\tau_t - \tau^*)(\phi - r(\tau_t)) + \eta^2.
\end{equation*}
Using the Success Margin assumption $r(\tau_t) - \phi \ge c_1(\tau_t - \tau^*)$, we obtain:
\begin{equation*}
    \mathbf{E}[V_{t+1} - V_t \mid \mathcal{F}_{t-1}] \le -2c_1 \eta (\tau_t - \tau^*)^2 + \eta^2 = -2c_1 \eta V_t + \eta^2.
\end{equation*}

Combining both cases, we have $\mathbf{E}[V_{t+1} - V_t] \le -2c_1 \eta \mathbf{E}[V_t \cdot \mathbf{1}_{\tau_t > \tau^* + \eta}] + 4 \eta^2$. Summing over $T$,
\begin{equation*}
    \mathbf{E}[V_{T+1} - V_1] \le -2c_1 \eta \mathbf{E}\left[\sum_{t: \tau_t > \tau^* + \eta} V_t \right] + 4 T \eta^2.
\end{equation*}
Rearranging, noting $V_{T+1} \ge 0$ and $V_1 \le Q^2$, and using $\eta = 1/\sqrt{T}$:
\begin{equation} \label{eq:sum_sq}
  \mathbf{E} \left[ \sum_{t: \tau_t > \tau^* + \eta} V_t \right] = \mathbf{E} \left[ \sum_{t: \tau_t > \tau^* + \eta} (\tau_t - \tau^*)^2\right] \le \frac{Q^2}{2c_1 \eta} + \frac{4T \eta}{2c_1} = O(\sqrt{T}).
\end{equation}

We finally have $R(T) \le \mathbf{E} \left[\sum_{t: \tau_t \ge \tau^*} (c(\tau_t) - c(\tau^*)) \right]$. Using the Cost Lipschitz assumption:
\begin{align*}
    R(T) & \le c_2 \mathbf{E} \left[\sum_{t: \tau_t \ge \tau^*} (\tau_t - \tau^*) \right] \le c_2 \left( \eta T +  \mathbf{E}\left[ \sum_{t: \tau_t > \tau^* + \eta} (\tau_t - \tau^*) \right] \right) \\ & = c_2 \left(\sqrt{T} + \mathbf{E}\left[ \sum_{t: \tau_t > \tau^* + \eta} (\tau_t - \tau^*) \right] \right).
\end{align*}
By applying Cauchy-Schwarz inequality followed by Jensen's inequality:
\begin{equation*}
    \mathbf{E} \left[ \sum_{t: \tau_t > \tau^* + \eta} (\tau_t - \tau^*) \right] \le \mathbf{E}\left[\sqrt{T \cdot \sum_{t: \tau_t > \tau^* + \eta} \left(\tau_t - \tau^* \right)^2} \right] \le \sqrt{T \cdot \mathbf{E} \left[ \sum_{t: \tau_t > \tau^* + \eta} (\tau_t - \tau^*)^2 \right]} .
\end{equation*}
Substituting the bound from (\ref{eq:sum_sq}) and combining with the previous inequality:
\begin{equation*}
    R(T) \le  c_2 \left( \sqrt{T} + \sqrt{T \cdot O(\sqrt{T})} \right) = O(T^{3/4}).
\end{equation*}
This completes the proof.
\end{proof}

\paragraph{Comparison with Theorem~\ref{thm:bandit2}.} Note that for this setting, Section~\ref{sec:discrete} shows a regret bound of $O(T^{2/3})$. The difference is that there, we assume the policy gets feedback about not just the reward $Y(x_t,\tau_t) \in \{0,1\}$ earned during step $t$, but also the cost $C(x_t,\tau_t)$ spent at step $t$. Though that bound was better than the regret bound of $O(T^{3/4})$ shown here, that algorithm requires maintaining a large set of discretized arms and associated UCB scores, while the algorithm above is ``memoryless'' and stores its entire state in the parameter $\tau_t$. Furthermore, the above algorithm does not need information about the cost at all, while the algorithm in Section~\ref{sec:full} needs to observe feedback on both the reward and cost.  However, the regret guarantee above requires the Success Margin condition and is with respect to the best fixed-threshold policy. In general, this guarantee cannot be strengthened to the fully randomized stochastic optimum over thresholds. Theorem~\ref{thm:bandit2} does not have these limitations.

%% file: main.bbl
\begin{thebibliography}{31}
\providecommand{\natexlab}[1]{#1}
\providecommand{\url}[1]{\texttt{#1}}
\expandafter\ifx\csname urlstyle\endcsname\relax
  \providecommand{\doi}[1]{doi: #1}\else
  \providecommand{\doi}{doi: \begingroup \urlstyle{rm}\Url}\fi

\bibitem[Agrawal and Devanur(2014)]{AgarwalD}
Shipra Agrawal and Nikhil~R. Devanur.
\newblock Bandits with concave rewards and convex knapsacks.
\newblock In \emph{Proceedings of the Fifteenth ACM Conference on Economics and
  Computation}, EC '14, page 989–1006, New York, NY, USA, 2014. Association
  for Computing Machinery.
\newblock \doi{10.1145/2600057.2602844}.
\newblock URL \url{https://doi.org/10.1145/2600057.2602844}.

\bibitem[Angelopoulos et~al.(2023)Angelopoulos, Candes, and
  Tibshirani]{angelopoulos2023conformal}
Anastasios Angelopoulos, Emmanuel Candes, and Ryan Tibshirani.
\newblock Conformal {PID} control for time series prediction.
\newblock In A.~Oh, T.~Naumann, A.~Globerson, K.~Saenko, M.~Hardt, and
  S.~Levine, editors, \emph{Advances in Neural Information Processing Systems},
  volume~36, pages 23047--23074. Curran Associates, Inc., 2023.
\newblock URL
  \url{https://proceedings.neurips.cc/paper_files/paper/2023/file/47f2fad8c1111d07f83c91be7870f8db-Paper-Conference.pdf}.

\bibitem[Angelopoulos and Bates(2021)]{angelopoulos2021gentle}
Anastasios~N. Angelopoulos and Stephen Bates.
\newblock A gentle introduction to conformal prediction and distribution-free
  uncertainty quantification.
\newblock \emph{arXiv preprint arXiv:2107.07511}, 2021.
\newblock URL \url{https://arxiv.org/abs/2107.07511}.

\bibitem[Angelopoulos et~al.(2025)Angelopoulos, Jordan, and
  Tibshirani]{angelopoulos2025gradient}
Anastasios~N. Angelopoulos, Michael~I. Jordan, and Ryan~J. Tibshirani.
\newblock Gradient equilibrium in online learning: Theory and applications.
\newblock \emph{Journal of Machine Learning Research}, 26\penalty0
  (305):\penalty0 1--68, 2025.
\newblock URL \url{http://jmlr.org/papers/v26/25-0356.html}.

\bibitem[Angelopoulos et~al.(2024)Angelopoulos, Barber, and
  Bates]{angelopoulos24a}
Anastasios~Nikolas Angelopoulos, Rina Barber, and Stephen Bates.
\newblock Online conformal prediction with decaying step sizes.
\newblock In Ruslan Salakhutdinov, Zico Kolter, Katherine Heller, Adrian
  Weller, Nuria Oliver, Jonathan Scarlett, and Felix Berkenkamp, editors,
  \emph{Proceedings of the 41st International Conference on Machine Learning},
  volume 235 of \emph{Proceedings of Machine Learning Research}, pages
  1616--1630. PMLR, 21--27 Jul 2024.
\newblock URL \url{https://proceedings.mlr.press/v235/angelopoulos24a.html}.

\bibitem[Areces et~al.(2025)Areces, Mohri, Hashimoto, and
  Duchi]{areces2025online}
Felipe Areces, Christopher Mohri, Tatsunori Hashimoto, and John Duchi.
\newblock Online conformal prediction via online optimization.
\newblock In \emph{Forty-second International Conference on Machine Learning},
  2025.
\newblock URL \url{https://openreview.net/forum?id=KwGc2JUIDK}.

\bibitem[Auer et~al.(2002)Auer, Cesa-Bianchi, and Fischer]{auer}
P.~Auer, N.~Cesa-Bianchi, and P.~Fischer.
\newblock Finite-time analysis of the multi-armed bandit problem.
\newblock \emph{Machine Learning}, 47:\penalty0 235--256, 2002.
\newblock URL \url{http://www2.compute.dtu.dk/pubdb/pubs/2088-full.html}.

\bibitem[Badanidiyuru et~al.(2018)Badanidiyuru, Kleinberg, and Slivkins]{BwK}
Ashwinkumar Badanidiyuru, Robert Kleinberg, and Aleksandrs Slivkins.
\newblock Bandits with knapsacks.
\newblock \emph{J. ACM}, 65\penalty0 (3), March 2018.
\newblock ISSN 0004-5411.
\newblock \doi{10.1145/3164539}.
\newblock URL \url{https://doi.org/10.1145/3164539}.

\bibitem[Barber et~al.(2023)Barber, Cand{\`e}s, Ramdas, and
  Tibshirani]{barber2023conformal}
Rina~Foygel Barber, Emmanuel Cand{\`e}s, Aaditya Ramdas, and Ryan Tibshirani.
\newblock Conformal prediction beyond exchangeability.
\newblock \emph{The Annals of Statistics}, 51\penalty0 (2):\penalty0 816--845,
  2023.

\bibitem[Cayci et~al.(2022)Cayci, Zheng, and
  Eryilmaz]{Cayci_Zheng_Eryilmaz_2022}
Semih Cayci, Yilin Zheng, and Atilla Eryilmaz.
\newblock A lyapunov-based methodology for constrained optimization with bandit
  feedback.
\newblock \emph{Proceedings of the AAAI Conference on Artificial Intelligence},
  36\penalty0 (4):\penalty0 3716--3723, Jun. 2022.
\newblock \doi{10.1609/aaai.v36i4.20285}.
\newblock URL \url{https://ojs.aaai.org/index.php/AAAI/article/view/20285}.

\bibitem[Ding et~al.(2020)Ding, Wei, Yang, Wang, and Jovanović]{ding2020}
Dongsheng Ding, Xiaohan Wei, Zhuoran Yang, Zhaoran Wang, and Mihailo~R.
  Jovanović.
\newblock Provably efficient safe exploration via primal-dual policy
  optimization, 2020.
\newblock URL \url{https://arxiv.org/abs/2003.00534}.

\bibitem[Feldman et~al.(2023)Feldman, Ringel, Bates, and Romano]{Feldman22}
Shai Feldman, Liran Ringel, Stephen Bates, and Yaniv Romano.
\newblock Achieving risk control in online learning settings.
\newblock \emph{Transactions on Machine Learning Research}, 2023.
\newblock ISSN 2835-8856.
\newblock URL \url{https://openreview.net/forum?id=5Y04GWvoJu}.

\bibitem[Gao et~al.(2025)Gao, Shan, Srinivas, and
  Vijayaraghavan]{gao2025volume}
Chao Gao, Liren Shan, Vaidehi Srinivas, and Aravindan Vijayaraghavan.
\newblock Volume optimality in conformal prediction with structured prediction
  sets.
\newblock In \emph{Forty-second International Conference on Machine Learning},
  2025.
\newblock URL \url{https://openreview.net/forum?id=oNDhnGrD51}.

\bibitem[Ge et~al.(2025)Ge, Bastani, and Bastani]{ge2025stochastic}
Haosen Ge, Hamsa Bastani, and Osbert Bastani.
\newblock Stochastic online conformal prediction with semi-bandit feedback.
\newblock In \emph{Forty-second International Conference on Machine Learning},
  2025.
\newblock URL \url{https://openreview.net/forum?id=IdRrKDsTZ8}.

\bibitem[Gibbs and Candes(2021)]{GibbsC}
Isaac Gibbs and Emmanuel Candes.
\newblock Adaptive conformal inference under distribution shift.
\newblock In M.~Ranzato, A.~Beygelzimer, Y.~Dauphin, P.S. Liang, and J.~Wortman
  Vaughan, editors, \emph{Advances in Neural Information Processing Systems},
  volume~34, pages 1660--1672. Curran Associates, Inc., 2021.
\newblock URL
  \url{https://proceedings.neurips.cc/paper_files/paper/2021/file/0d441de75945e5acbc865406fc9a2559-Paper.pdf}.

\bibitem[Guo and Liu(2024)]{pmlr-v247-guo24a}
Hengquan Guo and Xin Liu.
\newblock Stochastic constrained contextual bandits via lyapunov optimization
  based estimation to decision framework.
\newblock In Shipra Agrawal and Aaron Roth, editors, \emph{Proceedings of
  Thirty Seventh Conference on Learning Theory}, volume 247 of
  \emph{Proceedings of Machine Learning Research}, pages 2204--2231. PMLR, 30
  Jun--03 Jul 2024.

\bibitem[Gupta et~al.(2022)Gupta, Jung, Noarov, Pai, and Roth]{Gupta22}
Varun Gupta, Christopher Jung, Georgy Noarov, Mallesh~M. Pai, and Aaron Roth.
\newblock {Online Multivalid Learning: Means, Moments, and Prediction
  Intervals}.
\newblock In Mark Braverman, editor, \emph{13th Innovations in Theoretical
  Computer Science Conference (ITCS 2022)}, volume 215 of \emph{Leibniz
  International Proceedings in Informatics (LIPIcs)}, pages 82:1--82:24,
  Dagstuhl, Germany, 2022. Schloss Dagstuhl -- Leibniz-Zentrum f{\"u}r
  Informatik.
\newblock ISBN 978-3-95977-217-4.
\newblock \doi{10.4230/LIPIcs.ITCS.2022.82}.
\newblock URL
  \url{https://drops.dagstuhl.de/entities/document/10.4230/LIPIcs.ITCS.2022.82}.

\bibitem[Immorlica et~al.(2022)Immorlica, Sankararaman, Schapire, and
  Slivkins]{Immorlica22}
Nicole Immorlica, Karthik Sankararaman, Robert Schapire, and Aleksandrs
  Slivkins.
\newblock Adversarial bandits with knapsacks.
\newblock \emph{J. ACM}, 69\penalty0 (6), November 2022.
\newblock ISSN 0004-5411.
\newblock \doi{10.1145/3557045}.
\newblock URL \url{https://doi.org/10.1145/3557045}.

\bibitem[Kleinberg and Leighton(2003)]{KleinbergL}
Robert Kleinberg and Tom Leighton.
\newblock The value of knowing a demand curve: Bounds on regret for online
  posted-price auctions.
\newblock In \emph{Proceedings of the 44th Annual IEEE Symposium on Foundations
  of Computer Science}, FOCS '03, page 594, USA, 2003. IEEE Computer Society.
\newblock ISBN 0769520405.

\bibitem[Kushner and Yin(2013)]{kushner2013stochastic}
H.~Kushner and G.G. Yin.
\newblock \emph{Stochastic Approximation and Recursive Algorithms and
  Applications}.
\newblock Stochastic Modelling and Applied Probability. Springer New York,
  2013.
\newblock ISBN 9781489926968.
\newblock URL \url{https://books.google.com/books?id=sB0GCAAAQBAJ}.

\bibitem[Mahdavi et~al.(2012)Mahdavi, Jin, and Yang]{mahdavi2012trading}
Mehrdad Mahdavi, Rong Jin, and Tianbao Yang.
\newblock Trading regret for efficiency: online convex optimization with long
  term constraints.
\newblock \emph{J. Mach. Learn. Res.}, 13\penalty0 (1):\penalty0 2503–2528,
  September 2012.
\newblock ISSN 1532-4435.

\bibitem[Neely(2010)]{Neely}
Michael~J. Neely.
\newblock \emph{Stochastic Network Optimization with Application to
  Communication and Queueing Systems}.
\newblock Synthesis Lectures on Communication Networks. Morgan {\&} Claypool
  Publishers, 2010.
\newblock ISBN 978-3-031-79994-5.
\newblock \doi{10.2200/S00271ED1V01Y201006CNT007}.
\newblock URL \url{https://doi.org/10.2200/S00271ED1V01Y201006CNT007}.

\bibitem[Ramalingam et~al.(2025)Ramalingam, Kiyani, and Roth]{ramalingam2025}
Ramya Ramalingam, Shayan Kiyani, and Aaron Roth.
\newblock The relationship between no-regret learning and online conformal
  prediction.
\newblock In \emph{Forty-second International Conference on Machine Learning},
  2025.
\newblock URL \url{https://openreview.net/forum?id=JAcOYCqNo9}.

\bibitem[Slivkins et~al.(2024)Slivkins, Zhou, Sankararaman, and
  Foster]{Slivkins_context}
Aleksandrs Slivkins, Xingyu Zhou, Karthik~Abinav Sankararaman, and Dylan~J.
  Foster.
\newblock Contextual bandits with packing and covering constraints: a modular
  lagrangian approach via regression.
\newblock \emph{J. Mach. Learn. Res.}, 25\penalty0 (1), January 2024.
\newblock ISSN 1532-4435.

\bibitem[Srinivas(2025)]{Srinivas26}
Vaidehi Srinivas.
\newblock Online conformal prediction with efficiency guarantees, 2025.
\newblock URL \url{https://arxiv.org/abs/2507.02496}.

\bibitem[Tibshirani et~al.(2019)Tibshirani, Barber, Cand\`{e}s, and
  Ramdas]{tibshirani2019conformal}
Ryan Tibshirani, Rina~Foygel Barber, Emmanuel Cand\`{e}s, and Aaditya Ramdas.
\newblock \emph{Conformal prediction under covariate shift}.
\newblock Curran Associates Inc., Red Hook, NY, USA, 2019.

\bibitem[Vovk et~al.(2005)Vovk, Gammerman, and Shafer]{vovk2005algorithmic}
Vladimir Vovk, Alexander Gammerman, and Glenn Shafer.
\newblock \emph{Algorithmic Learning in a Random World}.
\newblock Springer, 2005.

\bibitem[Wang and Qiao(2024)]{WangQ}
Zhou Wang and Xingye Qiao.
\newblock Efficient online set-valued classification with bandit feedback.
\newblock In \emph{Proceedings of the 41st International Conference on Machine
  Learning}, ICML'24. JMLR.org, 2024.

\bibitem[Yang et~al.(2026)Yang, Kim, and
  Park]{yang2026onlineconformalpredictionadversarial}
Junyoung Yang, Kyungmin Kim, and Sangdon Park.
\newblock Online conformal prediction with adversarial semi-bandit feedback via
  regret minimization, 2026.
\newblock URL \url{https://arxiv.org/abs/2604.17984}.

\bibitem[Zimmert and Seldin(2019)]{Zimmert19a}
Julian Zimmert and Yevgeny Seldin.
\newblock An optimal algorithm for stochastic and adversarial bandits.
\newblock In Kamalika Chaudhuri and Masashi Sugiyama, editors,
  \emph{Proceedings of the Twenty-Second International Conference on Artificial
  Intelligence and Statistics}, volume~89 of \emph{Proceedings of Machine
  Learning Research}, pages 467--475. PMLR, 16--18 Apr 2019.
\newblock URL \url{https://proceedings.mlr.press/v89/zimmert19a.html}.

\bibitem[Zimmert et~al.(2019)Zimmert, Luo, and Wei]{ZimmertL}
Julian Zimmert, Haipeng Luo, and Chen{-}Yu Wei.
\newblock Beating stochastic and adversarial semi-bandits optimally and
  simultaneously.
\newblock \emph{CoRR}, abs/1901.08779, 2019.
\newblock URL \url{http://arxiv.org/abs/1901.08779}.

\end{thebibliography}
